\documentclass{article} 

\usepackage[utf8]{inputenc} 
\usepackage[T1]{fontenc}    
\usepackage{hyperref}       
\usepackage{url}            
\usepackage{booktabs}       
\usepackage{amsmath,amsfonts,amsthm}       
\usepackage{amssymb}
\usepackage{mathtools}      
\usepackage{nicefrac}       
\usepackage{microtype}      
\usepackage{subcaption}
\usepackage{mathabx}
\usepackage{bm}
\usepackage{dsfont}
\usepackage{multirow}
\usepackage[inline]{enumitem}
\usepackage{diagbox}
\usepackage{pifont}
\usepackage[capitalise]{cleveref}  
\usepackage{comment}
\usepackage{etoolbox}
\usepackage{xcolor}
\usepackage{graphicx}
\usepackage[ruled,vlined,linesnumbered]{algorithm2e}
\usepackage{algorithmic}
\usepackage[font=small]{caption}
\usepackage{tikz}
\usepackage{scalerel}

\usepackage[numbers,sort]{natbib}
\setlength{\textwidth}{6.5in}
\setlength{\textheight}{9in}
\setlength{\oddsidemargin}{0in}
\setlength{\evensidemargin}{0in}
\setlength{\topmargin}{-0.5in}
\newlength{\defbaselineskip}
\setlength{\defbaselineskip}{\baselineskip}
\setlength{\marginparwidth}{0.8in}

\newtheorem{theorem}{Theorem}

\newtheorem{lemma}{Lemma}

\allowdisplaybreaks

\DeclareMathOperator*{\argmin}{arg\,min}

\title{Non-Markovian Reinforcement Learning using Fractional Dynamics}

\usepackage{authblk}
\author[$\dagger$]{Gaurav Gupta}
\author[$\dagger$]{Chenzhong Yin}
\author[$\ddagger$]{Jyotirmoy V. Deshmukh}
\author[$\dagger$]{Paul Bogdan}
\affil[$\dagger$]{Ming Hsieh Department of Electrical and Computer 
Engineering,\\ University of Southern California, Los Angeles, CA, USA}
\affil[$\ddagger$]{Department of Computer Science, University of Southern California, Los Angeles, CA, USA}
\affil[ ]{{\texttt{\{ggaurav,chenzhoy,jdeshmuk,pbogdan\}@usc.edu}}}
\date{}

\begin{document}

\maketitle

\begin{abstract}
Reinforcement learning (RL) is a technique to learn the control policy for an agent that interacts with a stochastic environment. In any given state, the agent takes some action, and the environment determines the probability distribution over the next state as well as gives the agent some reward. Most RL algorithms typically assume that the environment satisfies Markov assumptions (i.e. the probability distribution over the next state depends only on the current state). In this paper, we propose a model-based RL technique for a system that has non-Markovian dynamics. Such environments are common in many real-world applications such as in human physiology, biological systems, material science, and population dynamics. Model-based RL (MBRL) techniques typically try to simultaneously learn a model of the environment from the data, as well as try to identify an optimal policy for the learned model. We propose a technique where the non-Markovianity of the system is modeled through a fractional dynamical system. We show that we can quantify the difference in the performance of an MBRL algorithm that uses bounded horizon model predictive control from the optimal policy. Finally, we demonstrate our proposed framework on a pharmacokinetic model of human blood glucose dynamics and show that our fractional models can capture distant correlations on real-world datasets.
\end{abstract}

\allowdisplaybreaks
\section{Introduction}
\label{sec:intro}

Reinforcement learning (RL) \cite{bertsekas2019reinforcement} is a technique to synthesize control policies for autonomous agents that interact with a stochastic environment.  The RL paradigm now contains a number of different kinds of algorithms, and has been successfully used across a diverse set of applications including autonomous vehicles, resource management in computer clusters \cite{mao2016resource}, traffic light control \cite{arel2010reinforcement},  web system configuration \cite{bu2009reinforcement}, and personalized recommendations \cite{zheng2018drn}.  In RL, we assume that in each state, the agent
performs some action and the environment picks a probability
distribution over the next state and assigns a reward (or negative cost). The reward is typically defined by the user with the help of a state-based (or state-action-based) reward function. The expected payoff that the agent may receive in any state can be defined in a number of different ways; in this paper, we assume that the payoff is an discounted sum of the local rewards (with some discount factor $\gamma \in [0,1]$) over some time horizon $H$. The purpose of RL is to find the stochastic policy (i.e. a distribution over actions conditioned on the current state), that optimizes the expected payoff for the agent. Most RL algorithms assume that the environment satisfies Markov assumptions, i.e. the probability distribution over the next state is dependent only on the current state (and not the history). In contrast, here, we investigate an RL procedure for a non-Markovian environment.

Broadly speaking, there are two classes of RL algorithms~\cite{chua2018deep}: model-based and model-free algorithms. Most classical RL algorithms are {\em model}-{\em based}; they assume that the environment is explicitly specified as a Markov Decision Process (MDP), and use dynamic programming to compute the expected payoff for each state of the MDP (called its value), as well as the optimal policy~\cite{sutton1999between,strehl2009reinforcement}. Classical RL algorithms have strong convergence guarantees stemming from the fact that the value of a state can be recursively expressed in terms of the value of the next state (called the Bellman equation), which allows us to define an operator to update the value (or the policy) for a given state across iterations. This operator (also known as the Bellman operator) can be shown to be a contraction mapping \cite{bertsekas2019reinforcement}. However, obtaining exact symbolic descriptions of models is often infeasible.  This led to the development of model-free reinforcement learning (MFRL) approaches that rely on sampling many model behaviors through simulations and eschew building a model altogether. MFRL algorithms can converge to an
optimal policy under the right set of assumptions; however, can suffer
from high sample complexity (i.e. the number of simulations required
to learn an optimal policy). This has led to investigation of a new class of model-based RL (MBRL) algorithms where the purpose is to  simultaneously learn the system model as well as the optimal policy \cite{mao2016resource}. Such algorithms are called {\em on policy}, as the policy learned during any iteration is used for improving the learned model as well as optimizing the policy further.  Most MBRL approaches use function approximators or Bayesian models to efficiently learn from scarce sample sets of system trajectories.  MBRL approaches tend to have lower sample complexity than MFRL as the learned model can accelerate the convergence by focusing on actions that are likely to be close to the optimal action. However, MBRL approaches can suffer severely from modeling errors \cite{todorov2012mujoco}, and may converge to less optimal solutions.

\begin{figure}
	\centering
     \includegraphics[width = 4.5in]{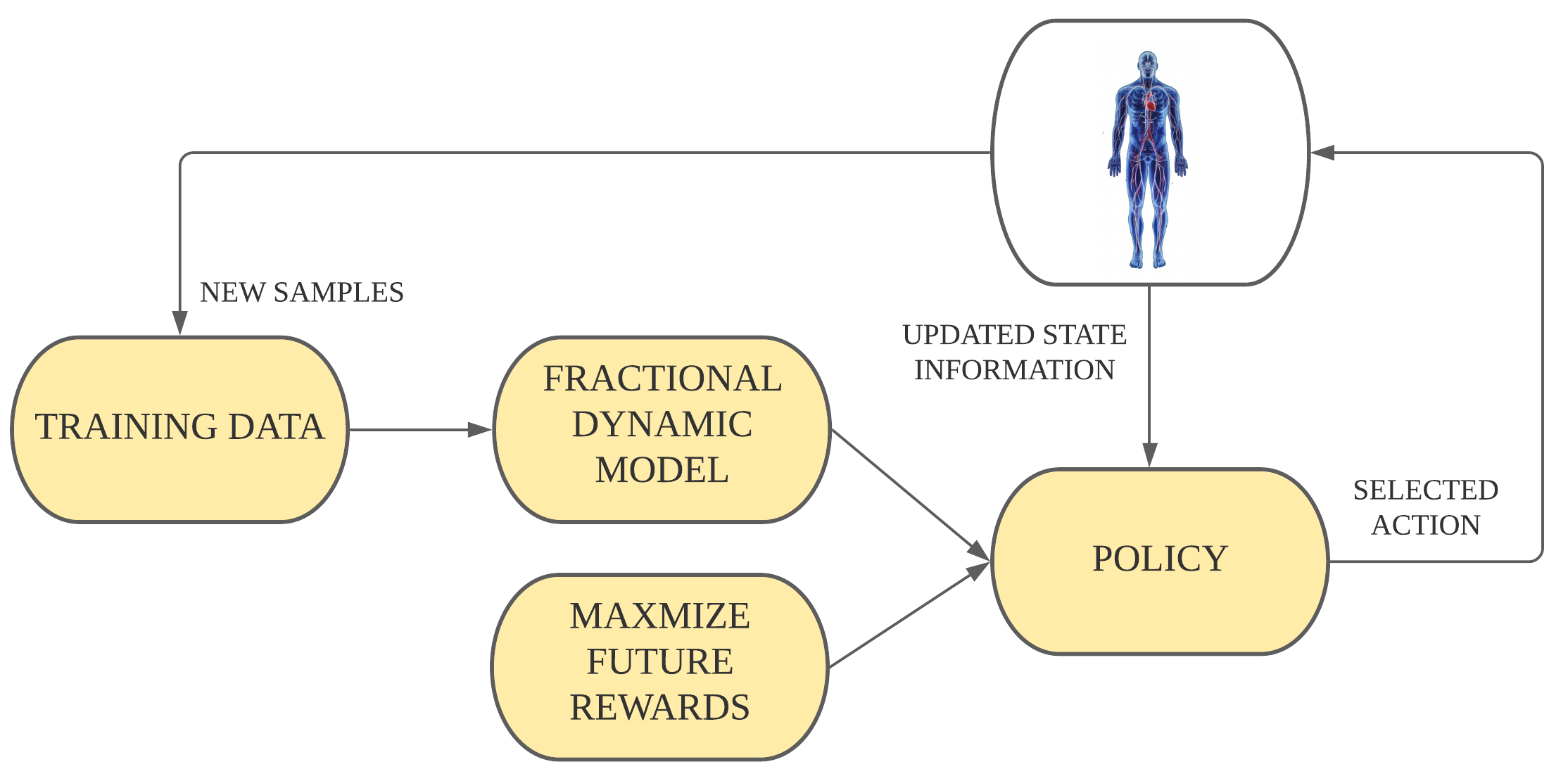}
	\caption{Non-Markovian Model Based Reinforcement Learning setup. The model based predictions are used to select actions, and then iteratively update the model dynamics.}
	\label{fig:fig_1}
\end{figure}

In both MFRL and MBRL algorithms, a fundamental assumption is that the environment satisfies Markovian properties, partly to avoid the complexity of dealing with the historical dependence in transitions. To overcome this challenge, we propose a non-Markovian MBRL framework that captures non-Markovian characteristics through a fractional dynamical systems formulation. Fractional dynamical systems can model non-Markovian processes characterized by a single fractal exponent and commonly arise in mathematical models of human physiological processes~\cite{West2010,xue}, biological systems, condensed matter and material sciences, and population dynamics \cite{gauravACC2018, yin2020discovering, gauravACC2019,gupta2018re}. Such systems can effectively model spatio-temporal properties of physiological signals such as blood oxygenation level dependent (BOLD), electromyogram (EMG), electrocardiogram (ECG), etc. \cite{gauravACC2018, baleanu2011fractional,magin2006fractional}. The advantage of using fractional dynamical models is that they can accurately represent long-range (historical) correlations (memory) through a minimum number of parameters (e.g., using a single fractal exponent to encode a long-range historical dependence rather than memorizing the trajectory itself or modeling it through a large set of autoregressive parameters). Though fractional models can be used to perform predictive control \cite{ghorbani2014}, problems such as learning these models effectively or obtaining optimal policies for such models in an RL setting have not been explored.

In this paper, we develop a novel non-Markovian MBRL technique
in which our algorithm alternates between incrementally learning the fractional exponent from data and learning the optimal policy on the updated model. We show that the optimal action in a given state can be efficiently computed by solving a quadratic program over a bounded horizon rollout from the state. The overview of our model-based reinforcement learning algorithm is shown in Fig.\,\ref{fig:fig_1}. In
this algorithm, we use on-policy simulations to gather additional RL data that is then used to update the model. Our model learning algorithm is based on minimizing the distance between the data's state-action distribution and the next state distribution induced by the controller. The fractional dynamic model is  then retrained using the cumulative dataset. The MBRL procedure is run for a finite number of user-specified iterations.

The rest of this paper is constructed as follows. We present our
problem statement in Section \ref{sec:probForm}. Section \ref{sec:rl}
contains our proposed non-Markovian MBRL algorithm. We demonstrate our
experimental results in Section \ref{sec:experi}. In the end,  we
conclude this paper with discussion and conclusion in Section
\ref{sec:concl}. 

\section{Problem Formulation}
\label{sec:probForm}
The reinforcement learning deals with the design of the controller (or policy) which minimizes the expected total cost. In the setting of a memoryless assumption, the Markov Decision Process (MDP) \cite{bellman1957} is used to model the system dynamics such that the future state depends only on the current state and action. For a state ${\bf s}_{t}\in \mathbb{R}^{n}$ and action ${\bf a}_{t}\in\mathbb{R}^{p}$, the future state evolve as ${\bf s}_{t+1}\sim P({\bf s}_{t+1}\vert {\bf s}_{t}, {\bf a}_{t})$, and a cost function $r_{t} = c({\bf s}_{t}, {\bf a}_{t})$. However, the Markov assumption does not work well with the long-range memory processes \cite{micciche2009modeling}. In this work, we take the non-Markovian setting, or History Dependent Process (HDP), and hence, the future state depends not only on the current action but also the history of states. The history at time $t$ is the set $\mathcal{H}_{t} = \{(s_{k})_{k\leq t}\}$, and for a trajectory $h\in\mathcal{H}_{t}$, we have $P({\bf s}_{t+1}\vert h,{\bf a}_{t})$, or alternatively, $P_{h}({\bf s}_{t+1}\vert s_{t},{\bf a}_{t})$, where the terminal state of the trajectory $h$ is written as $h(t) = s_{t}$. We consider a model-based approach for reinforcement learning in a finite-horizon setting. A non-Markovian policy $\pi(.\vert h)$ provides a distribution over actions given the history of states until time $t$ as $h\in\mathcal{H_{t}}$. For a given policy, the value function is defined as  $V_{h}^{\pi}= \mathbb{E}_{\pi(.\vert h)}\sum\nolimits_{t=0}^{T-1}c(s_{t}, a_{t})$, where the expectation is taken over state trajectories using policy $\pi$ and the HDP, and $T$ is the horizon under consideration. We formally define the non-Markovian MBRL problem in the Section\,\ref{ssec:nm_mbrl}.
\subsection{Fractional Dynamical Model}
\label{ssec:fracModel}
A linear discrete time fractional-order dynamical model is described as follows:
\begin{equation}
\Delta^{\alpha}{\bf s}[k+1] = {\bf A}{\bf s}[k] + {\bf B}{\bf a}[k],   
\label{eqn:fracLlinModel}
\end{equation}
\noindent where ${\bf s}\in\mathbb{R}^{n}$ is the state, ${\bf a} \in \mathbb{R}^{p}$ is the input action. The difference between a classic linear time-invariant (or Markovian) and the above model is the inclusion of fractional-order derivative whose expansion and discretization for any $i$th state $(1\leq i\leq n)$ can be written as
\begin{equation}
\Delta^{\alpha_{i}}s_{i}[k] = \sum\limits_{j=0}^{k}\psi(\alpha_{i},j) s_{i}[k-j],
\label{eqn:fracExpan}
\end{equation}
\noindent where $\alpha_{i}$ is the fractional order corresponding to the $i$th state dimension and $\psi(\alpha_{i},j) = \frac{\Gamma(j-\alpha_{i})}{\Gamma(-\alpha_{i})\Gamma(j+1)}$ with $\Gamma(.)$ denoting the gamma function. The system dynamics can also be written in the probabilistic manner as follows:
\begin{align}
&P_{{\bm \theta}}({\bf s}[k+1]\vert {\bf s}[0],\hdots,{\bf s}[k],{\bf a}[k]) = \mathcal{N}({\bm \mu}_{{\bm \theta}}, {\bm \Sigma}), \nonumber\\
&{\bm \mu}_{{\bm \theta}} = 
\begin{bmatrix*}[l]
\sum\nolimits_{j=1}^{k}\psi(\alpha_{i},j) s_{0}[k-j] + {\bf a}_{0}^{T}{\bf s}[k] + {\bf b}_{0}^{T}{\bf a}[k] + \mu_{0} \\
\sum\nolimits_{j=1}^{k}\psi(\alpha_{i},j) s_{1}[k-j] + {\bf a}_{1}^{T}{\bf s}[k] + {\bf b}_{1}^{T}{\bf a}[k] + \mu_{1} \\
\vdots\\
\sum\nolimits_{j=1}^{k}\psi(\alpha_{i},j) s_{n-1}[k-j] + {\bf a}_{n-1}^{T}{\bf s}[k] + {\bf b}_{n-1}^{T}{\bf a}[k] + \mu_{n-1}
\end{bmatrix*},
\label{eqn:hdp}
\end{align}
\noindent where ${\bm \theta} = \{\alpha, {\bf A}, {\bf B}, \mu, {\bm \Sigma}\}$, and ${\bf A} = [{\bf a}_{0},\hdots, {\bf a}_{n-1}]$, ${\bf B} = [{\bf
b}_{0},\hdots, {\bf b}_{n-1}]$. The fractional differencing operator in (\ref{eqn:hdp}) introduce the non-Markovianity by having long-range filtering operation on the state vectors.

\subsection{Non-Markovian Model Based Reinforcement Learning}
\label{ssec:nm_mbrl}
The actions in MBRL are preferred on the basis of predictions made by the undertaken model of the system dynamics. For many real-world systems, example blood glucose \cite{ghorbani2014,otoom2013real}, ECG activities \cite{xue}, the assumption of Markovian dynamics does not hold and hence the predictions are not accurate, leading to less rewarding actions selected for the system. As we note in the previous section \ref{ssec:fracModel} that non-Markovian dynamics can be effectively and compactly modeled as fractional dynamical system, we aim to use this system model for making predictions. The non-Markovian MBRL problem is formally defined as follows.

\noindent\textbf{Problem Statement:} Given non-Markovian state transitions, and actions dataset in the time horizon $k\in [0, T-1]$ as $\mathcal{D} = \big\{({\bf s}[0],\hdots, {\bf s}[k], {\bf a}[k]),{\bf s}[k+1]\big\}$. Let $P_{{\bm \theta}}({\bf s}[k+1]\vert {\bf s}[0],\hdots,{\bf s}[k],{\bf a}[k])$ be the non-Markovian system dynamics parameterized by the model parameters ${\bm \theta}$. Estimate the optimal policy which minimizes the expected future discounted cost
\begin{equation}
\begin{aligned}
\pi^{\ast} =\argmin\limits_{\pi}\,\mathbb{E}\sum\limits_{k=0}^{T-1}\gamma^{k} c({\bf s}[k], {\bf a}[k]),
\end{aligned}
\label{eqn:rew1}
\end{equation}
\noindent where $\gamma$ is the discount factor satisfying $\gamma\in[0, 1]$, and T is the horizon under consideration.

\section{Non-Markovian Reinforcement Learning}
\label{sec:rl}
The MBRL comprises of two key steps, namely (\emph{i}) the estimation of the model dynamics from the given data $\mathcal{D}$, and (\emph{ii}) the design of a policy for optimal action selection which minimizes the total expected cost using estimated dynamics. We discuss the solution to the non-Markovian MBRL as follows.

\subsection{Non-Markovian Model Predictive Control}
\label{ssec:mpc}
The Model Predictive Control (MPC) aims at estimating the closed-loop policy by optimizing the future discounted cost under a limited-horizon $H$ using some approximation of the environment dynamics and the cost. In this work, we are concerned with HDP using non-Markovian state dynamics. In MPC, the policy could be a deterministic action, or a distribution over actions, and we sample the action at each time-step in the latter. The MPC problem to estimate the policy at time-step $k$ for a given $h\in\mathcal{H}_{k}$ can be formally defined as
\begin{align}
\begin{aligned}
&\min\limits_{\pi(.\vert h)}&&\sum\nolimits_{l=k}^{k+H-1}\gamma^{l-k} \hat{c}(s[l], a[l]) \\
&\text{subject to}& \\
&&& s[l+1] = f(h, a[l], e[l]), \forall l\geq k\\
\end{aligned}
\label{eqn:mpc_general}
\end{align}
The approximation of the environment dynamics $f$ could be non-linear in general, and $e[l]$ is the system perturbation noise following some distribution $e\sim g_{e}$. The presence of $e$ provides randomness in the action sampling through policy, and the sampled action at each step is $a[k]$. The performance of the non-Markovian MPC based policy is bounded within the optimal policy using the following result.
\begin{theorem}
Given an approximate HDP with $\vert\vert\hat{P}_{h^{\prime}}(s^{\prime}\vert s, a) - P_{h}(s^{\prime}\vert s, a)\vert\vert_{1}\leq \mathcal{O}(t^{q})$, $\forall h, h^{\prime}\in \mathcal{H}_{t}$ with $h(t)=h^{\prime}(t)=s$, and $\vert\vert c(s,a)-\hat{c}(s,a)\vert\vert_{\infty}\leq\varepsilon$. The performance of the non-Markovian MPC based policy $\hat{\pi}$ is related to the optimal policy $\pi^{*}$ as
\begin{align}
\vert\vert V_{h_{0}}^{\hat{\pi}} - V_{h_{0}}^{\pi^{*}}\vert\vert_{\infty}&\leq 2\frac{1-\gamma^{H}}{1-\gamma}\left(\frac{c_{max}-c_{min}}{2}\right)H\mathcal{O}(T^{q})+ 2\varepsilon\frac{1-\gamma^{H}}{1-\gamma}\frac{1-\gamma^{T}}{1-\gamma},
\end{align}
where, $h_{0}\in\mathcal{H}_{0}$ is the initial history given to the system.
\end{theorem}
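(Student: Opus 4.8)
The plan is to prove this as a \emph{simulation-style} performance bound, controlling separately the error introduced by the approximate dynamics $\hat{P}$ and the error introduced by the approximate cost $\hat{c}$. First I would introduce the value $\hat{V}_h^{\pi}$ of a policy $\pi$ evaluated under the approximate model $(\hat{P}_{h'},\hat{c})$ and decompose the target quantity through the triangle inequality as
\begin{align*}
V_{h_0}^{\hat{\pi}} - V_{h_0}^{\pi^*} = \big(V_{h_0}^{\hat{\pi}} - \hat{V}_{h_0}^{\hat{\pi}}\big) + \big(\hat{V}_{h_0}^{\hat{\pi}} - \hat{V}_{h_0}^{\pi^*}\big) + \big(\hat{V}_{h_0}^{\pi^*} - V_{h_0}^{\pi^*}\big).
\end{align*}
Because $\hat{\pi}$ is, by construction, the minimizer of the approximate discounted cost solved by the MPC problem \eqref{eqn:mpc_general}, the middle term is non-positive and may be discarded. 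What remains are two terms of the same type, each measuring the gap between the true and the approximate value of a \emph{fixed} policy; bounding both by a common quantity is exactly what produces the overall factor of $2$ in the statement.

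The heart of the argument is therefore a one-policy simulation lemma bounding $\lVert V_h^{\pi} - \hat{V}_h^{\pi}\rVert_{\infty}$. I would expand each value as a discounted sum over the rollout horizon and add and subtract intermediate terms so that the total gap splits into a \textbf{cost} contribution and a \textbf{dynamics} contribution. For the cost contribution, the hypothesis $\lVert c - \hat{c}\rVert_{\infty}\le\varepsilon$ bounds the per-step error by $\varepsilon$; accumulating it over the nested horizons with the geometric weights $\sum_l\gamma^l$ yields the factor $\frac{1-\gamma^{H}}{1-\gamma}\frac{1-\gamma^{T}}{1-\gamma}$. For the dynamics contribution, the key tool is the elementary inequality $\lvert\mathbb{E}_{P}[f]-\mathbb{E}_{\hat{P}}[f]\rvert\le\tfrac12(\sup f-\inf f)\,\lVert P-\hat{P}\rVert_{1}$ applied with $f$ the cost-to-go, whose range is at most $c_{max}-c_{min}$; this is where the factor $\frac{c_{max}-c_{min}}{2}$ originates. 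Substituting the per-step transition bound $\lVert\hat{P}_{h'}-P_h\rVert_{1}\le\mathcal{O}(t^{q})$ and using $t\le T$ replaces it by the uniform $\mathcal{O}(T^{q})$, while the compounding of these per-step distribution drifts across the $H$-step rollout contributes the extra factor $H$ together with the discounted sum $\frac{1-\gamma^{H}}{1-\gamma}$.

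The step I expect to be the main obstacle is the \textbf{dynamics} contribution in the \emph{non-Markovian} setting. Unlike the Markov case, the cost-to-go at a given time does not depend on the terminal state alone but on the entire history, so one cannot naively telescope the trajectory distributions step by step. The hypothesis is stated precisely to handle this: for any two histories $h,h'$ sharing the same terminal state $h(t)=h'(t)=s$, the one-step transition laws differ by at most $\mathcal{O}(t^{q})$ in $\ell_1$. I would use this to couple the trajectory generated under the true history-dependent law $P_h$ with the one generated under $\hat{P}_{h'}$, bounding the induced drift in the marginal state distribution at each step and then summing. Care is needed because the per-step bound $\mathcal{O}(t^{q})$ \emph{grows} with time rather than being constant, so the uniform replacement by $\mathcal{O}(T^{q})$ and the linear-in-$H$ accumulation of drift must be justified over the whole horizon; this is the fractional-dynamics-specific feature that distinguishes the bound from a standard Markovian simulation lemma.
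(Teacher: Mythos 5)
Your overall strategy matches the paper's: a three-way decomposition whose middle term is discarded by the optimality of $\hat{\pi}$ under the approximate model, plus a single-policy simulation lemma that splits the remaining gaps into a cost part (giving $\varepsilon$ times geometric factors) and a dynamics part (giving $\frac{c_{max}-c_{min}}{2}$ via the $\ell_1$--$\ell_\infty$ H\"older bound after centering the cost-to-go at $\frac{c_{max}+c_{min}}{2}$, with the factor $H$ from compounding the per-step drift and $\mathcal{O}(T^{q})$ from $t\le T$). However, there is a genuine gap in the step where you discard the middle term. The MPC policy $\hat{\pi}$ is a \emph{receding-horizon} controller: at each time $t$ it minimizes only the $H$-step lookahead cost $\sum_{l=t}^{t+H-1}\gamma^{l}\hat{c}$ under $\hat{M}$. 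It is therefore not the minimizer of the full $T$-horizon objective under $(\hat{M},\hat{c})$, and the inequality $\hat{V}_{h_0}^{\hat{\pi}}\le \hat{V}_{h_0}^{\pi^*}$ for the $T$-horizon approximate values does not follow; receding-horizon policies can be strictly suboptimal for the long-horizon problem even under the very model they plan with.

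The paper avoids this by carrying out your decomposition \emph{per $H$-step block}: it writes $V_{t,M}^{\hat{\pi}}(\hat{h}_t)-V_{t,M}^{\pi^*}(h^*_t)$ as the difference of the two $H$-step partial sums plus the tail difference $V_{t+H,M}^{\hat{\pi}}-V_{t+H,M}^{\pi^*}$, applies the four-term split and the optimality inequality $\mathbb{E}_{\hat{h}_t,\hat{\pi},\hat{M}}\sum_{k=t}^{t+H-1}\gamma^{k}\hat{c}\le\mathbb{E}_{\hat{h}_t,\pi^*,\hat{M}}\sum_{k=t}^{t+H-1}\gamma^{k}\hat{c}$ inside each block (where it is actually valid, since both policies are compared on the same $H$-step MPC objective from the same history $\hat{h}_t$), and then telescopes the tails over $t=0,\dots,T-1$. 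This block-plus-telescope structure is not cosmetic: it is also what produces the product $\frac{1-\gamma^{H}}{1-\gamma}\cdot\frac{1-\gamma^{T}}{1-\gamma}$ in the $\varepsilon$ term (an inner sum over the $H$-window times an outer sum over the blocks) and the per-block factor of $2$; a single global application of your simulation lemma over horizon $T$ would instead give $\varepsilon\frac{1-\gamma^{T}}{1-\gamma}$, so your own target constants already presuppose the nested structure that your written decomposition omits.
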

The assumption of model approximation is critical here, and the error increases if the exponent $q$ increases. For the MDP setting, the approximation is taken as $q=0$. However, for a HDP with the history of length $t$, we scale the approximation gap with $t$. The MPC horizon also plays a role in the error bound, and the error increases for larger $H$.

The non-Markovian MPC could be computationally prohibitive (expensive) in the general setting. Consequently, we now discuss the fractional dynamical MPC approach which is non-Markovian but computationally tractable.

\subsection{Fractional Model Predictive Control}
\label{ssec:frac_mpc}
The linear discrete fractional dynamical model as discussed in \eqref{eqn:fracLlinModel} is used as an approximation to the non-Markovian environment dynamics.
Formally, for our purpose, the fractional MPC problem using \eqref{eqn:mpc_general} is defined as  
\begin{align}
\begin{aligned}
&&\min\limits_{{\bf a}[k]}\sum\nolimits_{l=k}^{k+H-1}&\gamma^{l-k} \hat{c}({\bf s}[l], {\bf a}[l]) \\
&\text{s.t.}& \\
&&\Delta^{\alpha}\bar{\bf s}[l+1] &= {\bf A}\bar{\bf s}[l] + {\bf B}{\bf a}[l] + e[l],\\
&&\bar{\bf s}[k^{\prime}] &= {\bf s}[k],\forall k^{\prime}\leq k, \\
&&{\bf s}_{min} &\leq \bar{\bf s}[l]\leq {\bf s}_{max},\forall l,
\end{aligned}
\label{eqn:mpc_frac}
\end{align}
\noindent where ${\bf s}_{min}, {\bf s}_{max}$ are feasibility bounds on the problem according to the application, and the model noise $e\sim\mathcal{N}(0, \Sigma)$. Note that \eqref{eqn:mpc_frac} provides a policy using fractional MPC. The action $a[k]$ is sampled from this policy by first sampling $e\sim\mathcal{N}(0, \Sigma)$, and then solving \eqref{eqn:mpc_frac}. The non-Markovian fractional dynamics would introduce the computation complexities in optimally solving the problem in \eqref{eqn:mpc_frac}. However, since the constraints in \eqref{eqn:mpc_frac} are linear, for cost approximations $\hat{c}$ that are quadratic, a quadratic programming (QP) solution can be developed to solve the fractional MPC efficiently. 
We refer the reader to Appendix\,\ref{appn:fracMPCQprog} for the QP version of the fractional MPC. 
Further, a convex formulation of the costs $\hat{c}$ also enables efficient solution of the fractional MPC using convex programming solvers, for example, CPLEX and Gurobi \cite{kronqvist2019review, hutter2010automated}.

Next, we discuss the methodologies required to make an approximation of the non-Markovian environment using fractional dynamics. 
\subsection{Model Estimation}
\label{ssec:modelEst}
The fractional dynamical model as described in the  Section\,\ref{ssec:fracModel} is estimated using the approach proposed in \cite{gauravACC2018} by replacing the unknown inputs with known actions at any time-step. For the sake of completeness, we present estimation algorithm as Algorithm\,\ref{alg:frac_model}. We note that in \cite{gauravACC2018} the input data is obtained only once, and hence in this work appropriate modification in Algorithm\,\ref{alg:frac_model} is performed to work with recursively updated dataset as we see in Section\,\ref{ssec:mbrl}.
\begin{algorithm}
    \textbf{Input:} $\mathcal{D} = \big\{({\bf s}[0],\hdots, {\bf s}[k], {\bf a}[k]),{\bf s}[k+1]\big\}$ in the time-horizon $k\in [0, T - 1]$\\
    \textbf{Output:} ${\bm \theta} = \{\alpha, {\bf A}, {\bf B}, \mu, {\bf \Sigma}\}$
    \begin{algorithmic}[1]
        \STATE Estimate $\alpha$ using wavelets fitting for each state dimension 
        \FOR{$i = 1, 2, \hdots, n$}
        \STATE Compute $z_{i}[k] = \Delta^{\alpha_{i}}s[k+1]$ using $\alpha_{i}$ \hfill$\triangleright$ Eq.\eqref{eqn:fracExpan}
        \STATE Aggregate $z_{i}[k], s[k], a[k]$ as $Z_{i}, S, U$
        \STATE $[a_{i}^{T}, b_{i}^{T}, \mu] = \arg\min\limits_{a, b, \mu}\vert\vert Z_{i} - Sa - Ub - \mu\vert\vert_{2}^{2}$ with ${\bm \Sigma}$ as squared error
        \ENDFOR
    \end{algorithmic}
    \caption{Fractional\_Dynamics\_Estimation}
    \label{alg:frac_model}
\end{algorithm}

The Markovian model assume memoryless property and hence lacks long-range correlations for further accurate modeling. The existence of long-range correlations can be estimated by computing the Hurst exponent $\bar{H}$. For long-range correlations, the $\bar{H}$ lies in the range of $(0.5, 1]$. The fractional coefficient $\alpha$ in our model is related with $\bar{H}$ as $\alpha = \bar{H} - 0.5$. The Hurst exponent can be estimated from the slope of log-log variations of the variance of wavelets coefficients vs scale as noted in \cite{flandrin}. In the experiments Section\,\ref{ssec:real_world}, we show log-log plot to observe the presence of long-range correlations in the real-world data.

\subsection{Model Based Reinforcement Learning}
\label{ssec:mbrl}
The non-Markovian MPC exploiting the fractional dynamical model formulation in Section\,\ref{ssec:frac_mpc} utilizes a dataset of the form $\mathcal{D} = \big\{({\bf s}[0],\hdots, {\bf s}[k], {\bf a}[k]),{\bf s}[k+1]\big\}$ in the time-horizon $k\in [0, T-1]$. We note that the performance of such MPC can be further improved by using reinforcement learning. The selected actions by the MPC ${\bf a}[k]$ can be used to gather new transitions ${\bf s}[k+1]\vert {\bf s}[0],\hdots,{\bf s}[k], {\bf a}[k]$, or acquiring data using on-policy. The aggregated data is now used to re-estimate the model dynamics, and then perform MPC. Specifically, the MBRL proceeds as follows. Using the seed dataset, a parameterized fractional model dynamics is estimated as $P_{{\bm \theta}}({\bf s}[k+1]\vert {\bf s}[0],\hdots,{\bf s}[k],{\bf a}[k])$. The model dynamics is used to minimize the discounted future cost as MPC in equation (\ref{eqn:mpc_frac}). The selected action along with the history of states ${\bf s}[0],\hdots,{\bf s}[k]$ is used to gather the next transition using on-policy as  ${\bf s}[k+1]\vert {\bf s}[0],\hdots,{\bf s}[k], {\bf a}[k]$. The seed dataset is updated with the gathered on-policy data $\mathcal{D}_{RL}$ to get aggregated dataset. The fractional dynamics are updated using the new dataset, and the aforementioned steps are repeated for a given number of iterations. The above steps are summarized as Algorithm\,\ref{alg:frl}. The Algorithm\,\ref{alg:frl} utilizes  Algorithm\,\ref{alg:frac_model} iteratively for the fractional model estimation. We now proceed to Section\,\ref{sec:experi} for numerical demonstration of the proposed schemes.
\begin{algorithm}
    \textbf{Input:} Seed dataset $\mathcal{D}_{s} = \big\{({\bf s}[0],\hdots, {\bf s}[k], {\bf a}[k]),{\bf s}[k+1]\big\}$ in the time-horizon $k\in [0, T - 1]$\\
    \textbf{Output:} ${\bm \theta}$\\
    \textbf{Initialize:} $\mathcal{D}_{RL} \leftarrow \phi$
    \begin{algorithmic}[1]
        \FOR{$iter = 1, 2, \hdots, iter\_max$}
        \STATE $\theta \leftarrow \text{Fractional}\_\text{Dynamics}\_\text{Estimation}(\mathcal{D}_{s}\cup\mathcal{D}_{RL})$
        \STATE Set initial state $\bar{\bf s}[0] \leftarrow {\bf s}[0]$
        \FOR{$k = 0, 1, \hdots, T-1$}
        \STATE Sample action $a[k]$ from the fractional MPC based policy using $\bar{{\bf s}}[l],\forall l\leq k$ \hfill$\triangleright$ Eq.\eqref{eqn:mpc_frac}
        \STATE Get $\bar{\bf s}[k+1]$ by executing ${\bf a}[k]$
        \STATE $\mathcal{D}_{RL}\leftarrow\mathcal{D}_{RL}\cup\big\{(\bar{\bf s}[0],\hdots, \bar{\bf s}[k], {\bf a}[k]), \bar{\bf s}[k+1] \big\}$
        \ENDFOR
        \ENDFOR
    \end{algorithmic}
    \caption{Fractional\_Reinforcement\_Learning}
    \label{alg:frl}
\end{algorithm}
\section{Experiments}
\label{sec:experi}

We show example of the fractional MBRL on the blood glucose (BG) control. The motive of blood glucose control is to make the BG in the range of $70-180 mg/dL$. The BG control is crucial in the treatment of T1 diabetes patients which have inability to produce the required insulin amounts. The low levels of glucose in the blood plasma is termed as hypoglycemia, while the high levels is termed as hyperglycemia. For the application of reinforcement learning, the cost function is taken as risk associated with different levels of BG in the system. In \cite{pmid19469677} a quantified version of risk is proposed as function of BG levels which is written as follows.
\begin{align}
f(b) &= 1.509 \times (\log(b)^{1.084} - 5.381), \nonumber \\
R(b) &= 10\times(f(b))^2.
\label{eqn:risk_def}
\end{align}

Next, the cost for the transition instance $s[k+1]\vert s[0],\hdots, s[k], a[k])$ is written as
\begin{equation}
    \hat{c}(s[k], a[k]) = R(s[k+1]) - R(s[k]),
\end{equation}
\noindent where the state $s[k]\in\mathbb{R}$ represents the BG level at time instant $k$, and $a[k]$ represents the insulin dose and $R(.)$ is from \eqref{eqn:risk_def}. In rest of the section, we experiment with simulated and real-world dataset, respectively.
\subsection{UVa T1DM Simulator}

\begin{figure}[!t]
\centering
\begin{minipage}{\textwidth}
\centering
\begin{tikzpicture}
\node[anchor=north west,inner sep=0] at (0,0) {\includegraphics*[width = 2in, ]{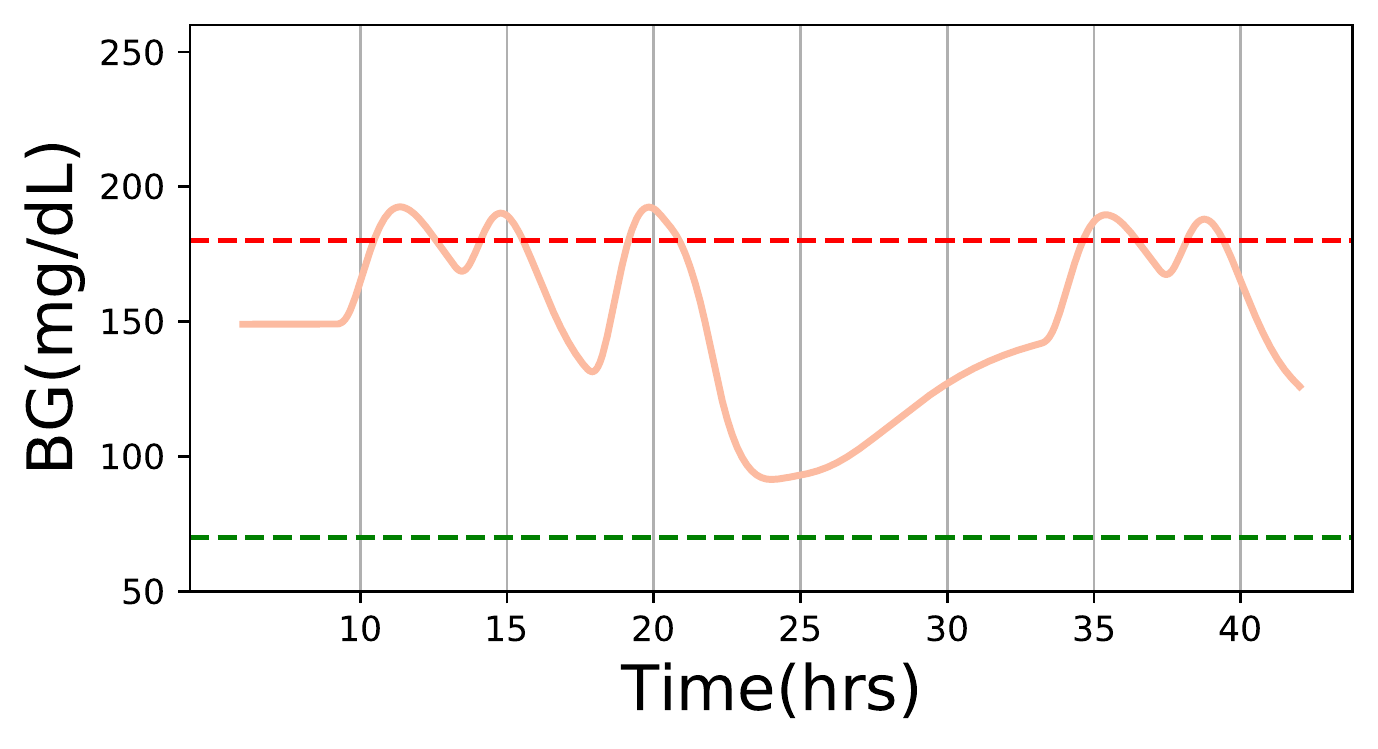}};
\draw [->] (5.3,-1) -- (5.8,-1);
\draw [->] (5.3+6,-1) -- (5.8+6,-1);
\node[anchor=north west,inner sep=0] at (6,0) {\includegraphics*[width = 2in, ]{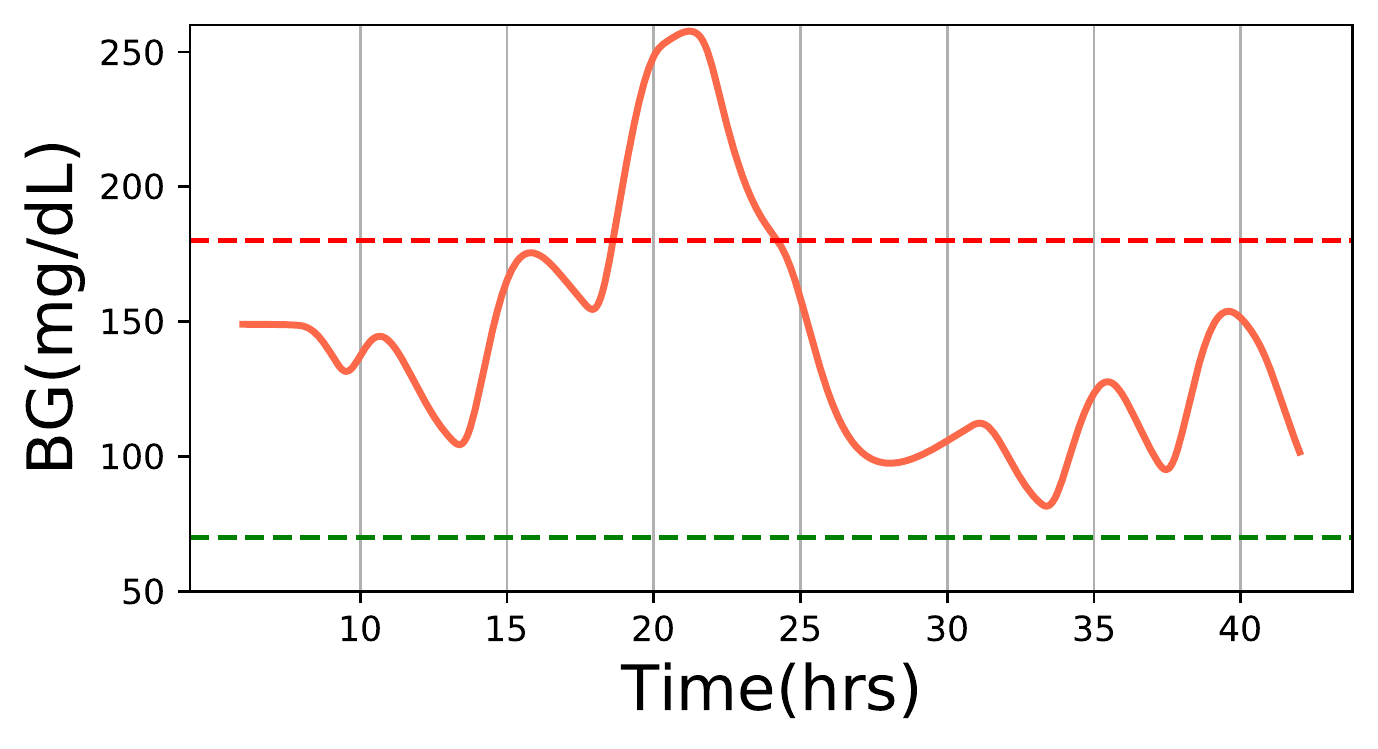}};
\node[anchor=north west,inner sep=0] at (12,0) {\includegraphics*[width = 2in, ]{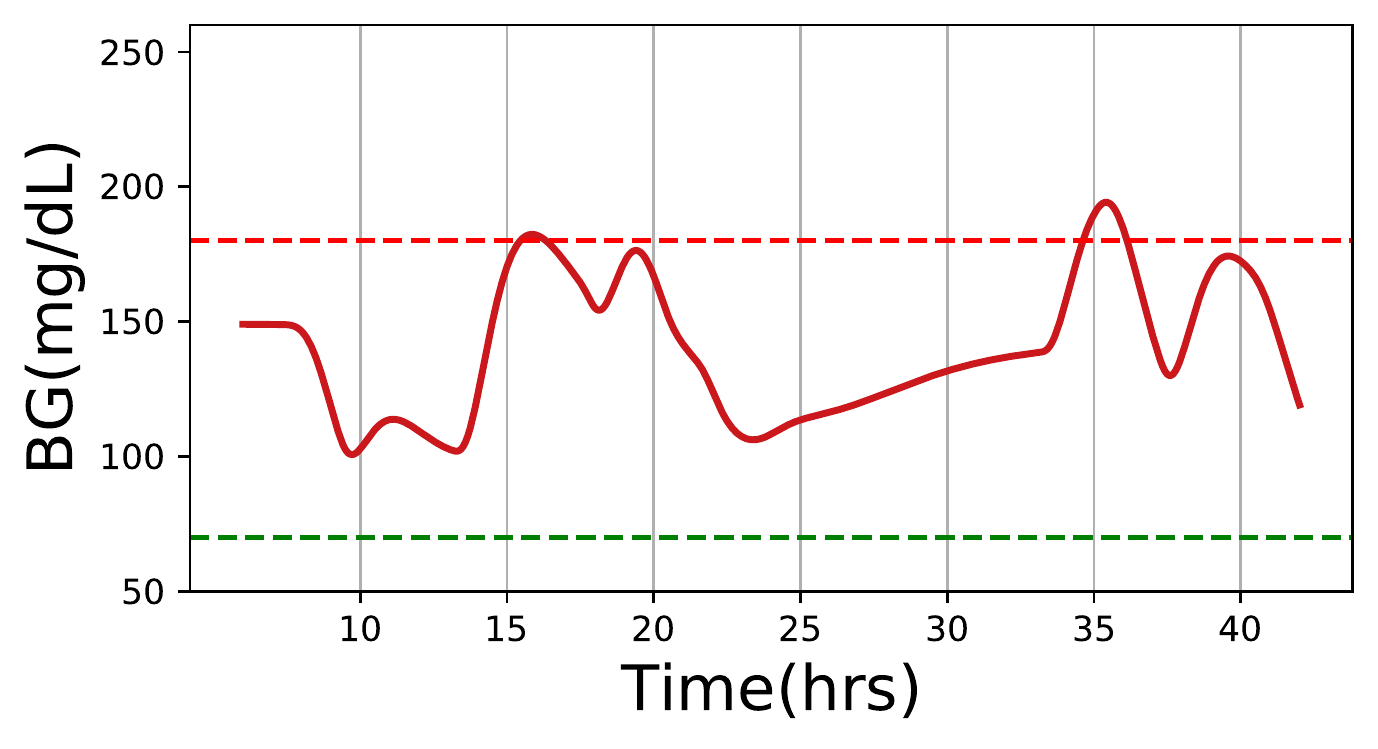}};
\node[anchor=north west,inner sep=0] at (0.7,-3) {\footnotesize
    \begin{tabular}{ccc}
    \toprule
    $< 70$  & $70-180$ & $> 180$ \\
    $mg/dL$ & $mg/dL$ & $mg/dL$\\
    \midrule
     $0 \%$& $76.67\%$ & $23.32\%$ \\
     \midrule
    \end{tabular}
};
\node[anchor=north west,inner sep=0] at (6.7,-3) {\footnotesize
    \begin{tabular}{ccc}
    \toprule
    $< 70$  & $70-180$ & $> 180$ \\
    $mg/dL$ & $mg/dL$ & $mg/dL$\\
    \midrule
     $0 \%$& $84.52\%$ & $15.47\%$ \\
     \midrule
    \end{tabular}
};
\node[anchor=north west,inner sep=0] at (12.7,-3) {\footnotesize
    \begin{tabular}{ccc}
    \toprule
    $< 70$  & $70-180$ & $> 180$ \\
    $mg/dL$ & $mg/dL$ & $mg/dL$\\
    \midrule
     $0 \%$& $93.53\%$ & $6.46\%$ \\
     \midrule
    \end{tabular}
};
\end{tikzpicture}\\
(a)
\end{minipage}
\\
\begin{minipage}{\textwidth}
\centering
\begin{tikzpicture}
\node[anchor=north west,inner sep=0] at (0,0) {\includegraphics[width = 2in, ]{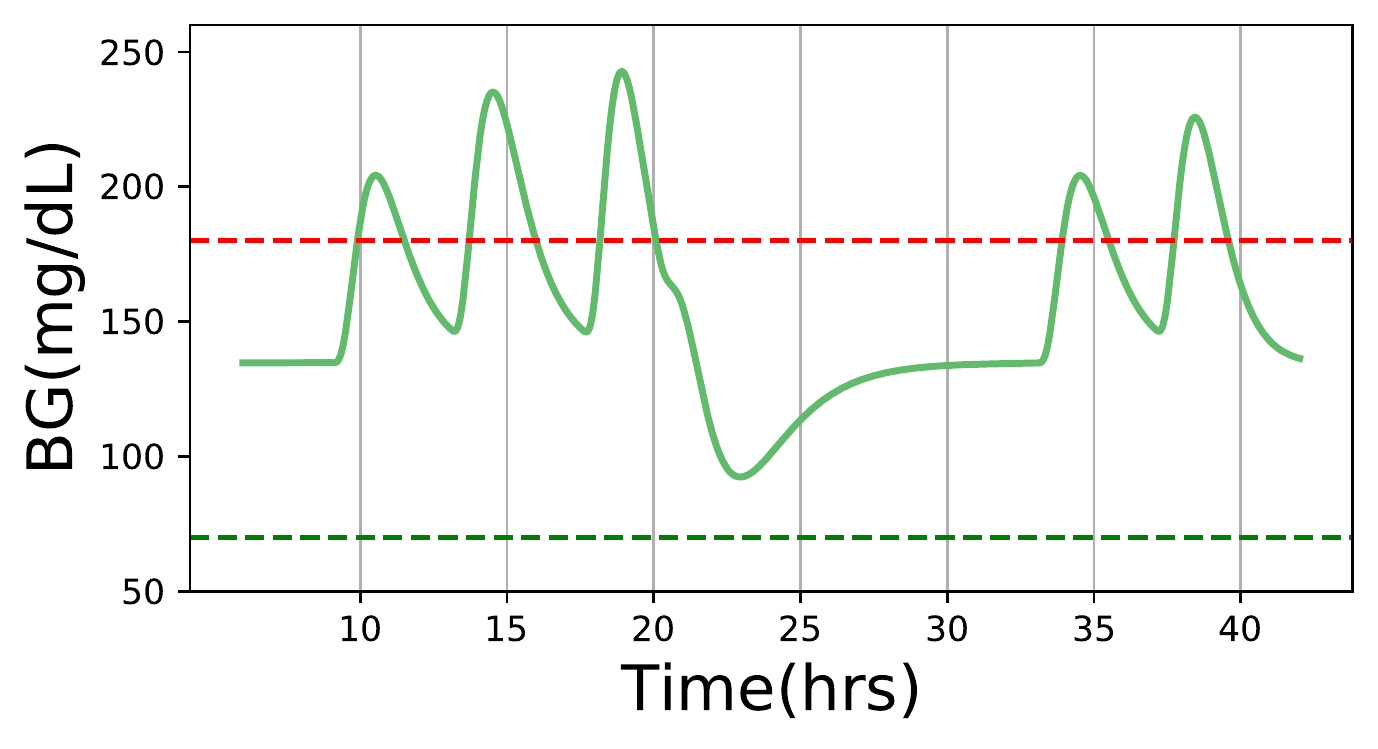}};
\draw [->] (5.3,-1) -- (5.8,-1);
\draw [->] (5.3+6,-1) -- (5.8+6,-1);
\node[anchor=north west,inner sep=0] at (6,0) {\includegraphics[width = 2in, ]{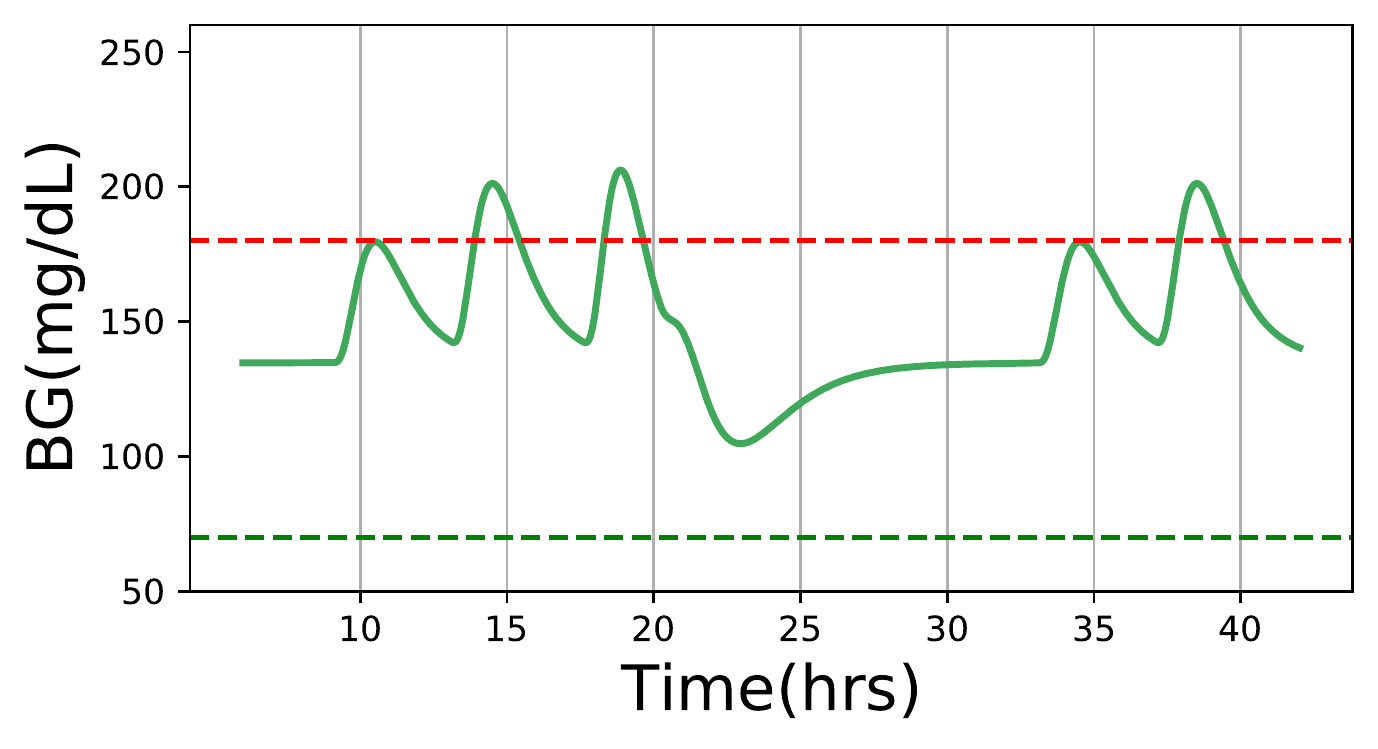}};
\node[anchor=north west,inner sep=0] at (12,0) {\includegraphics[width = 2in, ]{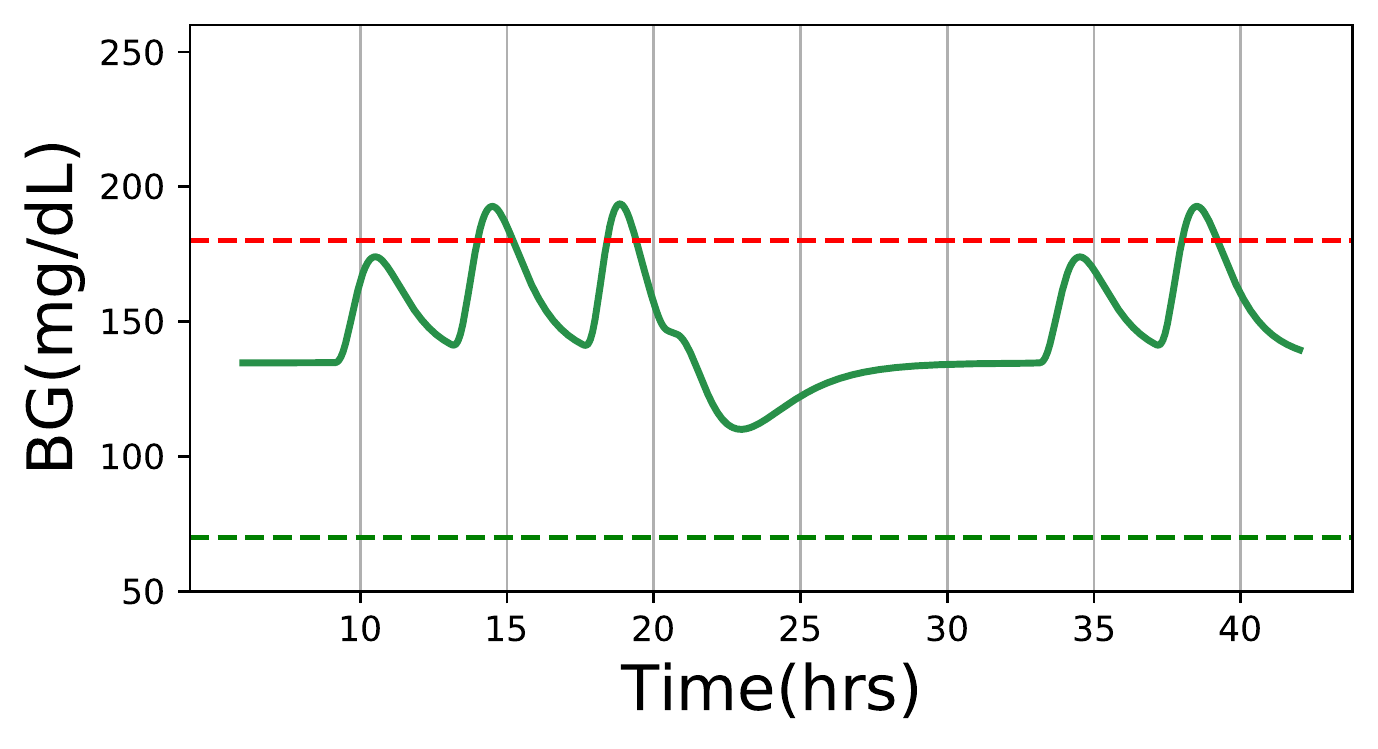}};
\node[anchor=north west,inner sep=0] at (0.7,-3) {\footnotesize
    \begin{tabular}{ccc}
    \toprule
    $< 70$  & $70-180$ & $> 180$ \\
    $mg/dL$ & $mg/dL$ & $mg/dL$\\
    \midrule
     $0 \%$& $74.13\%$ & $25.86\%$ \\
     \midrule
    \end{tabular}
};
\node[anchor=north west,inner sep=0] at (6.7,-3) {\footnotesize
    \begin{tabular}{ccc}
    \toprule
    $< 70$  & $70-180$ & $> 180$ \\
    $mg/dL$ & $mg/dL$ & $mg/dL$\\
    \midrule
     $0 \%$& $87.52\%$ & $12.47\%$ \\
     \midrule
    \end{tabular}
};
\node[anchor=north west,inner sep=0] at (12.7,-3) {\footnotesize
    \begin{tabular}{ccc}
    \toprule
    $< 70$  & $70-180$ & $> 180$ \\
    $mg/dL$ & $mg/dL$ & $mg/dL$\\
    \midrule
     $0 \%$& $90.30\%$ & $9.69\%$ \\
     \midrule
    \end{tabular}
};
\end{tikzpicture}\\
(b)
\end{minipage}
\caption{Blood Glucose (BG) level with time, by implementation of fractional Reinforcement Learning Scheme as Controller, of two Adults in (a) and (b). For each subject, the BG level trajectories are shown from left-to-right in the increasing number of RL iterations with leftmost, middle, and rightmost are outputs at $5$, $10$, and $15$ iterations. As RL iterations increase the MBRL scheme learns better policy and the BG level stays more in the desired level of $70-180 mg/dL$. The percentage of time spend in different BG level zone is shown in tables below each plot.}
\label{fig:bg_sim}
\end{figure}

The UVa/Padova T1DM \cite{pmid19444330} is a FDA approved T1 Diabetes simulator which supports multiple virtual subjects. An open-source implementation of the simulator \cite{xieSimulator} is used in this work. We take similar simulation setup as in \cite{wang2015acc}. Each subject is simulated for a total of $36$ hours starting from 6 a.m. in the morning. The meal timings/quantity are fixed as $50g$ CHO at 9 a.m., $70g$ at 1 p.m, $90g$ at 5:30 p.m, and $25g$ at 8 p.m. On day 2, $50g$ at 9 a.m., and $70g$ at 1 p.m. The continuous glucose monitor (CGM) sensor measures the BG at every 5 mins.

For applying Algorithm\,\ref{alg:frl}, we set the horizon length $H$ in MPC be $100$ samples, discount factor $\gamma = 0.99$. The $s_{min}, s_{max}$ in MPC problem \eqref{eqn:mpc_frac} are set as $70, 180$ respectively. The maximum number of RL iterations $iter\_max$ are set as $30$. We show the BG output of the simulator using Algorithm\,\ref{alg:frl} as controller in Fig.\,\ref{fig:bg_sim}. We observe that the fraction of time BG stays in the desired zone $70-180 mg/dL$ increase with increasing the learning iterations in the Algorithm\,\ref{alg:frl}. The data gathered using on-policy helps the model making better prediction, and with as few iterations as $15$ we have more than $90\%$ of time BG stays in the desired levels.

\subsection{Real-World Data}
\label{ssec:real_world}

Testing the controllers on real-world systems is difficult because of the health risks associated with the patients. We take the Diabetes dataset from UCI repository \cite{Dua:2019} which records the BG level and insulin dosage for $70$ patients. While testing controller is not possible here, hence we present the analysis regarding the modeling part. The long-range memory in the signals exist if the associated fractional exponent lies in the range of $(0, 0.5]$ as noted in Section\,\ref{ssec:modelEst}. In Fig.\,\ref{fig:uci_data_log_log}, we show the log-log plots of the variance of wavelets coefficients at various scales, for two subjects. We observe that the estimated value of $\alpha$ lies in $(0,0.5]$ which indicates presence of long-range memory, and hence fractional models can be used to make better predictions.

\subsection{Discussion}

Insulin dependent diabetes mellitus (IDDM) is a kind of chronic disease characterized by abnormal BG level. Topically, high level of BG, which is caused by either the pancreas does not compound enough insulin (a hormone that signals cells to uptake glucose in the bloodstream) or the produced insulin cannot be effectively used by the human body, can result in a disorder of metabolic that give rise to irreversible damage (such as lesion of patients' organs, retinopathy, nephropathy, peripheral neuropathy and blindness) \cite{tejedor2020reinforcement,derouich2002effect}. According to recent research, nowadays, IDDM is influencing 20-40 million people around the world and this amount is increasing over time \cite{you2016type}.  Related work \cite{diabetes1995resource} presents that tight blood glucose control along with insulin injections can help to control the disease, however intensive control can result in the risk of low blood sugar. This symptom can increase the risk of heart disease, or even sudden death.

\begin{figure}
	\centering
	\includegraphics[height = 2in]{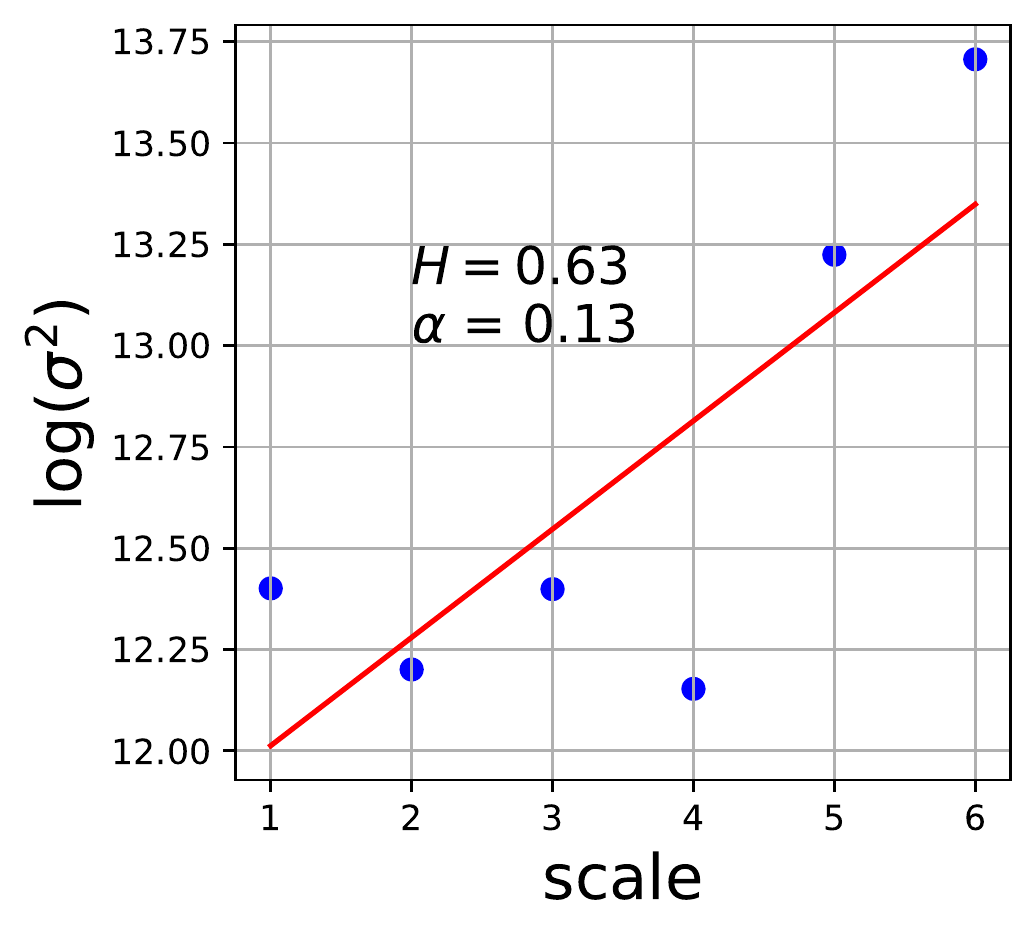}
	\includegraphics[height = 2in]{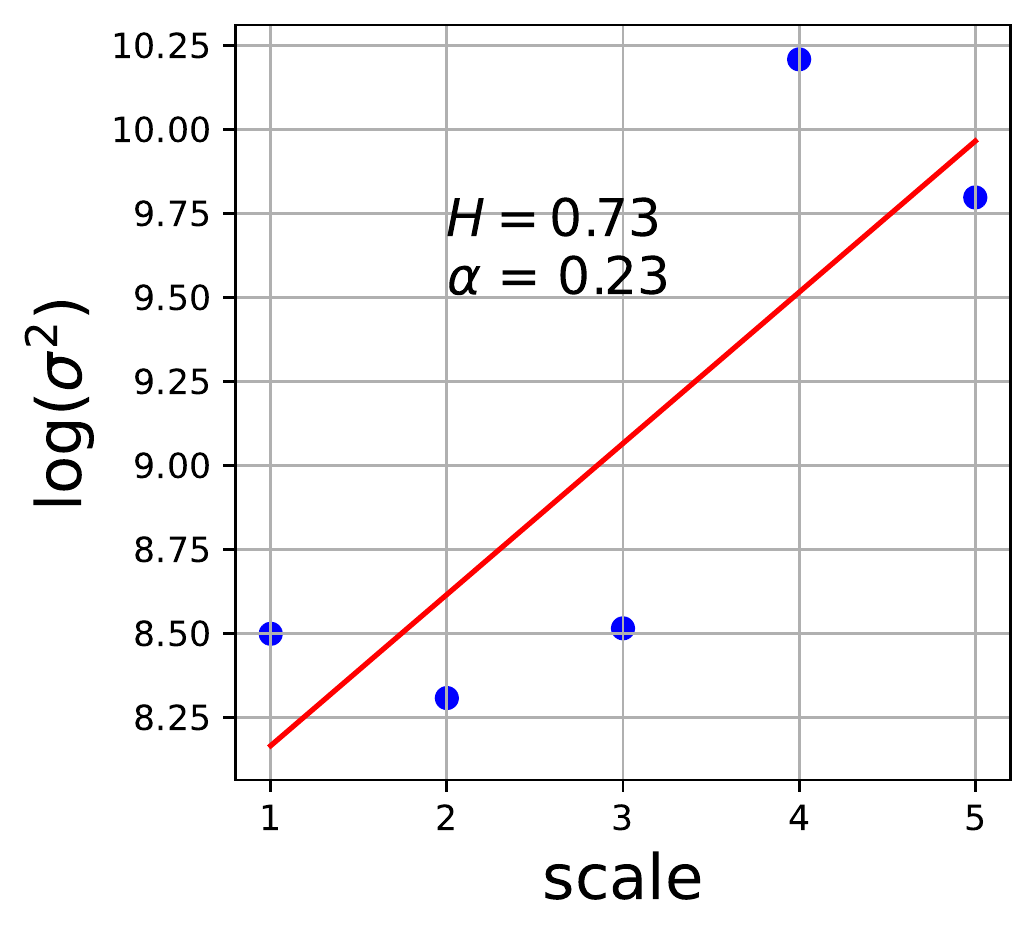}
	\caption{The log-log plot of variance of wavelet coefficients vs scale of two subjects in (a) and (b). The values of $\alpha$ lies in $(0,0.5]$ which indicates long-range correlations.}
	\label{fig:uci_data_log_log}
\end{figure}

To effectively and safely against with IDDM, in the work of \cite{coffen2009magnitude}, the authors constructed a CGM to detect the insulin among in the individuals in real time. This monitor can read the blood glucose of the patients for every 5 mins. Combining with an insulin pump (a small device that automatically inject insulin), the CGM can constructed a system called "artificial pancreas" (AP). The AP system is designed to control the symptom of patients which can dynamically predict the among of insulin the individual should be delivered. For many years, researchers have worked on designing efficient algorithms/models to correctly predict the required insulin of individuals in AP systems. In this paper, we present an innovative MBRL algorithm which is explored in the non-Markovian model to dynamically make the prediction of the among of insulin patients demand with high-accuracy and high-efficiency.  
\section{Conclusion}
\label{sec:concl}
There are many important learning control problems that are not
naturally formulated as Markov decision processes.  For example, if
the agent cannot directly observe the environment state, then the use
of a partially observable Markov decision process (POMDP)
\cite{smallwood1973} model is more appropriate.  Even in presence of
full observability, the probability distribution over next states may
not depend only on the current state.  A more general class can be
termed as History Dependent Process (HDP), which can be looked as
infinite-state POMDP \cite{leike2016nonparametric}. Another non-MDP
class for model-free is Q-value Uniform Decision Process (QDP)
\cite{ijcai2018-353}. The non-Markovianity in the rewards structure is
explored in \cite{gaon2019reinforcement, agarwal2021reinforcement}
which utilize model-free learning, and RL for POMDP is explored in
\cite{perez2017nonmarkovian} which is also model-free. 
MBRL is used for various robotics application \cite{Nagabandi_2018} in
the MDP setting. The deep probabilistic networks using MDP is used in
\cite{chua2018deep}. 

In this work, we constructed a non-Markovian Model Based Reinforcement Learning (MBRL) algorithm consisted with fractional dynamics model and the model predictive control. The current Reinforcement learning (RL) approaches have two kinds of limitations: (i) model-free RL models can achieve a high predict accuracy, but these approaches need a large number of data-points to train the model; (ii) current models don't make latent behavioral patterns into considerations which can affect the prediction accuracy in MBRL. We show that our non-Markovian MBRL model can validly avoid these limitations. Firstly, in our algorithm, we gather additional on-policy data to alternate between gathering the initial data, hence it needs less sample points than the general model-free RL approaches. Secondly, fractional dynamical model is the key element in our algorithm to improve/guarantee the prediction accuracy. The experiments on the blood glucose (BG) control to dynamically predict the desired insulin amount show that the proposed non-Markovian framework helps in achieving desired levels of BG for longer times with consistency. 

The richness of complex systems cannot be always modeled as Markovian dynamics. Previous works have shown that the long-range memory property of fractional differentiation operators can model biological signals efficaciously and accurately. Thus, we have modeled the blood glucose as non-Markovian fractional dynamical system and developed solutions using reinforcement learning approach. Finally, while the application of non-Markovian MBRL open venues for real-world implementation but proper care has to be taken especially when we have to deal with the healthcare systems. The future investigations would involve more personalized modeling capabilities for such systems with utilization of the domain knowledge. Nonetheless, we show that the use of long-range dependence in the biological models is worth exploring and simple models yield benefits of compactness as well as better accuracy of the predictions.

\nocite{bhardwaj2020blending, ross2012agnostic, kakade2003}

\bibliographystyle{IEEEtran}
\bibliography{IEEEabrv,fractionalRL}

\begin{thebibliography}{10}
\providecommand{\url}[1]{#1}
\csname url@samestyle\endcsname
\providecommand{\newblock}{\relax}
\providecommand{\bibinfo}[2]{#2}
\providecommand{\BIBentrySTDinterwordspacing}{\spaceskip=0pt\relax}
\providecommand{\BIBentryALTinterwordstretchfactor}{4}
\providecommand{\BIBentryALTinterwordspacing}{\spaceskip=\fontdimen2\font plus
\BIBentryALTinterwordstretchfactor\fontdimen3\font minus
  \fontdimen4\font\relax}
\providecommand{\BIBforeignlanguage}[2]{{%
\expandafter\ifx\csname l@#1\endcsname\relax
\typeout{** WARNING: IEEEtran.bst: No hyphenation pattern has been}%
\typeout{** loaded for the language `#1'. Using the pattern for}%
\typeout{** the default language instead.}%
\else
\language=\csname l@#1\endcsname
\fi
#2}}
\providecommand{\BIBdecl}{\relax}
\BIBdecl

\bibitem{bertsekas2019reinforcement}
D.~P. Bertsekas, \emph{Reinforcement learning and optimal control}.\hskip 1em
  plus 0.5em minus 0.4em\relax Athena Scientific Belmont, MA, 2019.

\bibitem{mao2016resource}
H.~Mao, M.~Alizadeh, I.~Menache, and S.~Kandula, ``Resource management with
  deep reinforcement learning,'' in \emph{Proceedings of the 15th ACM Workshop
  on Hot Topics in Networks}, 2016, pp. 50--56.

\bibitem{arel2010reinforcement}
I.~Arel, C.~Liu, T.~Urbanik, and A.~G. Kohls, ``Reinforcement learning-based
  multi-agent system for network traffic signal control,'' \emph{IET
  Intelligent Transport Systems}, vol.~4, no.~2, pp. 128--135, 2010.

\bibitem{bu2009reinforcement}
X.~Bu, J.~Rao, and C.-Z. Xu, ``A reinforcement learning approach to online web
  systems auto-configuration,'' in \emph{2009 29th IEEE International
  Conference on Distributed Computing Systems}.\hskip 1em plus 0.5em minus
  0.4em\relax IEEE, 2009, pp. 2--11.

\bibitem{zheng2018drn}
G.~Zheng, F.~Zhang, Z.~Zheng, Y.~Xiang, N.~J. Yuan, X.~Xie, and Z.~Li, ``Drn: A
  deep reinforcement learning framework for news recommendation,'' in
  \emph{Proceedings of the 2018 World Wide Web Conference}, 2018, pp. 167--176.

\bibitem{chua2018deep}
K.~Chua, R.~Calandra, R.~McAllister, and S.~Levine, ``Deep reinforcement
  learning in a handful of trials using probabilistic dynamics models,'' in
  \emph{Advances in Neural Information Processing Systems}, 2018, pp.
  4754--4765.

\bibitem{sutton1999between}
R.~S. Sutton, D.~Precup, and S.~Singh, ``Between mdps and semi-mdps: A
  framework for temporal abstraction in reinforcement learning,''
  \emph{Artificial intelligence}, vol. 112, no. 1-2, pp. 181--211, 1999.

\bibitem{strehl2009reinforcement}
A.~L. Strehl, L.~Li, and M.~L. Littman, ``Reinforcement learning in finite
  mdps: Pac analysis.'' \emph{Journal of Machine Learning Research}, vol.~10,
  no.~11, 2009.

\bibitem{todorov2012mujoco}
E.~Todorov, T.~Erez, and Y.~Tassa, ``Mujoco: A physics engine for model-based
  control,'' in \emph{2012 IEEE/RSJ International Conference on Intelligent
  Robots and Systems}.\hskip 1em plus 0.5em minus 0.4em\relax IEEE, 2012, pp.
  5026--5033.

\bibitem{West2010}
\BIBentryALTinterwordspacing
B.~West, ``Fractal physiology and the fractional calculus: A perspective,''
  \emph{Frontiers in Physiology}, vol.~1, p.~12, 2010. [Online]. Available:
  \url{https://www.frontiersin.org/article/10.3389/fphys.2010.00012}
\BIBentrySTDinterwordspacing

\bibitem{xue}
Y.~Xue, S.~Pequito, J.~R. Coelho, P.~Bogdan, and G.~J. Pappas, ``Minimum number
  of sensors to ensure observability of physiological systems: a case study,''
  in \emph{Allerton}, 2016.

\bibitem{gauravACC2018}
G.~Gupta, S.~Pequito, and P.~Bogdan, ``Dealing with unknown unknowns:
  Identification and selection of minimal sensing for fractional dynamics with
  unknown inputs,'' \emph{in American Control Conference}, 2018,
  arXiv:1803.04866.

\bibitem{yin2020discovering}
C.~Yin, G.~Gupta, and P.~Bogdan, ``Discovering laws from observations: A
  data-driven approach,'' in \emph{International Conference on Dynamic Data
  Driven Application Systems}.\hskip 1em plus 0.5em minus 0.4em\relax Springer,
  2020, pp. 302--310.

\bibitem{gauravACC2019}
G.~Gupta, S.~Pequito, and P.~Bogdan, ``Learning latent fractional dynamics with
  unknown unknowns,'' in \emph{2019 American Control Conference (ACC)}, 2019,
  pp. 217--222.

\bibitem{gupta2018re}
------, ``Re-thinking eeg-based non-invasive brain interfaces: modeling and
  analysis,'' in \emph{2018 ACM/IEEE 9th International Conference on
  Cyber-Physical Systems (ICCPS)}.\hskip 1em plus 0.5em minus 0.4em\relax IEEE,
  2018, pp. 275--286.

\bibitem{baleanu2011fractional}
D.~Baleanu, J.~A.~T. Machado, and A.~C. Luo, \emph{Fractional dynamics and
  control}.\hskip 1em plus 0.5em minus 0.4em\relax Springer Science \& Business
  Media, 2011.

\bibitem{magin2006fractional}
R.~L. Magin, \emph{Fractional calculus in bioengineering}.\hskip 1em plus 0.5em
  minus 0.4em\relax Begell House Redding, 2006, vol.~2, no.~6.

\bibitem{ghorbani2014}
M.~{Ghorbani} and P.~{Bogdan}, ``Reducing risk of closed loop control of blood
  glucose in artificial pancreas using fractional calculus,'' in \emph{2014
  36th Annual International Conference of the IEEE Engineering in Medicine and
  Biology Society}, 2014, pp. 4839--4842.

\bibitem{bellman1957}
R.~BELLMAN, ``A markovian decision process,'' \emph{Journal of Mathematics and
  Mechanics}, vol.~6, no.~5, pp. 679--684, 1957.

\bibitem{micciche2009modeling}
S.~Micciche, ``Modeling long-range memory with stationary markovian
  processes,'' \emph{Physical Review E}, vol.~79, no.~3, p. 031116, 2009.

\bibitem{otoom2013real}
M.~Otoom, H.~Alshraideh, H.~M. Almasaeid, D.~L{\'o}pez-de Ipi{\~n}a, and
  J.~Bravo, ``A real-time insulin injection system,'' in \emph{International
  Workshop on Ambient Assisted Living}.\hskip 1em plus 0.5em minus 0.4em\relax
  Springer, 2013, pp. 120--127.

\bibitem{kronqvist2019review}
J.~Kronqvist, D.~E. Bernal, A.~Lundell, and I.~E. Grossmann, ``A review and
  comparison of solvers for convex minlp,'' \emph{Optimization and
  Engineering}, vol.~20, no.~2, pp. 397--455, 2019.

\bibitem{hutter2010automated}
F.~Hutter, H.~H. Hoos, and K.~Leyton-Brown, ``Automated configuration of mixed
  integer programming solvers,'' in \emph{International Conference on
  Integration of Artificial Intelligence (AI) and Operations Research (OR)
  Techniques in Constraint Programming}.\hskip 1em plus 0.5em minus 0.4em\relax
  Springer, 2010, pp. 186--202.

\bibitem{flandrin}
P.~Flandrin, ``Wavelet analysis and synthesis of fractional brownian motion,''
  \emph{IEEE Transactions on Information Theory}, vol.~38, no.~2, pp. 910--917,
  March 1992.

\bibitem{pmid19469677}
W.~Clarke and B.~Kovatchev, ``{{S}tatistical tools to analyze continuous
  glucose monitor data},'' \emph{Diabetes Technol. Ther.}, vol. 11 Suppl 1, pp.
  45--54, Jun 2009.

\bibitem{pmid19444330}
B.~P. Kovatchev, M.~Breton, C.~D. Man, and C.~Cobelli, ``{{I}n silico
  preclinical trials: a proof of concept in closed-loop control of type 1
  diabetes},'' \emph{J Diabetes Sci Technol}, vol.~3, no.~1, pp. 44--55, Jan
  2009.

\bibitem{xieSimulator}
\BIBentryALTinterwordspacing
J.~Xie, ``Simglucose v0.2.1,'' 2018. [Online]. Available:
  \url{https://github.com/jxx123/simglucose}
\BIBentrySTDinterwordspacing

\bibitem{wang2015acc}
Q.~{Wang}, J.~{Xie}, P.~{Molenaar}, and J.~{Ulbrecht}, ``Model predictive
  control for type 1 diabetes based on personalized linear time-varying subject
  model consisting of both insulin and meal inputs: In silico evaluation,'' in
  \emph{2015 American Control Conference (ACC)}, 2015, pp. 5782--5787.

\bibitem{Dua:2019}
\BIBentryALTinterwordspacing
D.~Dua and C.~Graff, ``{UCI} machine learning repository,'' 2017. [Online].
  Available: \url{http://archive.ics.uci.edu/ml}
\BIBentrySTDinterwordspacing

\bibitem{tejedor2020reinforcement}
M.~Tejedor, A.~Z. Woldaregay, and F.~Godtliebsen, ``Reinforcement learning
  application in diabetes blood glucose control: A systematic review,''
  \emph{Artificial Intelligence in Medicine}, p. 101836, 2020.

\bibitem{derouich2002effect}
M.~Derouich and A.~Boutayeb, ``The effect of physical exercise on the dynamics
  of glucose and insulin,'' \emph{Journal of biomechanics}, vol.~35, no.~7, pp.
  911--917, 2002.

\bibitem{you2016type}
W.-P. You and M.~Henneberg, ``Type 1 diabetes prevalence increasing globally
  and regionally: the role of natural selection and life expectancy at birth,''
  \emph{BMJ open diabetes research and care}, vol.~4, no.~1, 2016.

\bibitem{diabetes1995resource}
D.~Control, C.~T.~R. Group \emph{et~al.}, ``Resource utilization and costs of
  care in the diabetes control and complications trial,'' \emph{Diabetes Care},
  vol.~18, no.~11, pp. 1468--1478, 1995.

\bibitem{coffen2009magnitude}
R.~D. Coffen and L.~M. Dahlquist, ``Magnitude of type 1 diabetes
  self-management in youth health care needs diabetes educators,'' \emph{The
  Diabetes Educator}, vol.~35, no.~2, pp. 302--308, 2009.

\bibitem{smallwood1973}
R.~D. Smallwood and E.~J. Sondik, ``The optimal control of partially observable
  markov processes over a finite horizon,'' \emph{Operations Research},
  vol.~21, no.~5, pp. 1071--1088, 1973.

\bibitem{leike2016nonparametric}
J.~Leike, ``Nonparametric general reinforcement learning,'' 2016.

\bibitem{ijcai2018-353}
S.~J. Majeed and M.~Hutter, ``On q-learning convergence for non-markov decision
  processes,'' in \emph{Proceedings of the Twenty-Seventh International Joint
  Conference on Artificial Intelligence, {IJCAI-18}}.\hskip 1em plus 0.5em
  minus 0.4em\relax International Joint Conferences on Artificial Intelligence
  Organization, 7 2018, pp. 2546--2552.

\bibitem{gaon2019reinforcement}
M.~Gaon and R.~I. Brafman, ``Reinforcement learning with non-markovian
  rewards,'' 2019.

\bibitem{agarwal2021reinforcement}
M.~Agarwal and V.~Aggarwal, ``Reinforcement learning for joint optimization of
  multiple rewards,'' 2021.

\bibitem{perez2017nonmarkovian}
J.~Perez and T.~Silander, ``Non-markovian control with gated end-to-end memory
  policy networks,'' 2017.

\bibitem{Nagabandi_2018}
\BIBentryALTinterwordspacing
A.~Nagabandi, G.~Kahn, R.~S. Fearing, and S.~Levine, ``Neural network dynamics
  for model-based deep reinforcement learning with model-free fine-tuning,''
  \emph{2018 IEEE International Conference on Robotics and Automation (ICRA)},
  May 2018. [Online]. Available:
  \url{http://dx.doi.org/10.1109/ICRA.2018.8463189}
\BIBentrySTDinterwordspacing

\bibitem{bhardwaj2020blending}
M.~Bhardwaj, S.~Choudhury, and B.~Boots, ``Blending mpc \& value function
  approximation for efficient reinforcement learning,'' 2020.

\bibitem{ross2012agnostic}
S.~Ross and J.~A. Bagnell, ``Agnostic system identification for model-based
  reinforcement learning,'' 2012.

\bibitem{kakade2003}
S.~Kakade, M.~Kearns, and J.~Langford, ``Exploration in metric state spaces,''
  in \emph{Proceedings of the Twentieth International Conference on
  International Conference on Machine Learning}, ser. ICML'03, 2003, p.
  306–312.

\end{thebibliography}

\clearpage
\appendix

\allowdisplaybreaks
\allowbreak
\section{Proof of Theorem 1}
\label{appn:thm_proof}

For the notational purpose of the proof, we define the non-Markovian environment as $M$, and approximation to the environment as $\hat{M}$. The non-Markovian MPC based policy is $\hat{\pi}$, and the optimal policy is $\pi^{*}$. The approximation quality of the HDP dynamics are $\vert\vert\hat{P}_{h^{\prime}}(s^{\prime}\vert s, a) - P_{h}(s^{\prime}\vert s, a)\vert\vert_{1}\leq \mathcal{O}(t^{q})$, $\forall h, h^{\prime}\in \mathcal{H}_{t}$ with $h(t)=h^{\prime}(t)=s$, and the approximation for cost $\vert\vert c(s,a)-\hat{c}(s,a)\vert\vert_{\infty}\leq\varepsilon$. We also assume that the range of cost function is $[c_{min}, c_{max}]$. The initial history information provided is taken as $h_{0}\in\mathcal{H}_{0}$, for example, initial state $s_{0}$. We define $\hat{h} (h^{*})$ as the trajectories taken by policy $\hat{\pi}$ ($\pi^{*}$), respectively. We define the value function at step-$t$ ($0\leq t\leq T-1$), with history h using the model $M$, and policy $\pi$ as
\begin{equation}
    V_{t,M}^{\pi}(h) = \mathbb{E}_{h, \pi, M}\sum\limits_{k=t}^{T-1}\gamma^{k}(s_{k}, a_{k}),
\end{equation}
\noindent where $h\in\mathcal{H}_{t}$ and $h(t)=s_{t}$. We first present the simulation lemma for non-Markovian HDP as follows.
\begin{lemma}
Given an approximation $\hat{M}$ of the environment $M$ as, $\vert\vert\hat{P}_{h^{\prime}}(s^{\prime}\vert s, a) - P_{h}(s^{\prime}\vert s, a)\vert\vert_{1}\leq \mathcal{O}(t^{q})$, $\forall h, h^{\prime}\in \mathcal{H}_{t}$ with $h(t)=h^{\prime}(t)=s$, and the approximation for cost $\vert\vert c(s,a)-\hat{c}(s,a)\vert\vert_{\infty}\leq\varepsilon$, and any policy $\pi$ and history $h_{t}\in\mathcal{H}_{t}$,
\begin{align}
&\vert\vert \mathbb{E}_{h, \pi, \hat{M}}\sum\limits_{k=t}^{t+H-1}\gamma^{k}(s_{k}, a_{k}) -\mathbb{E}_{h, \pi, \hat{M}}\sum\limits_{k=t}^{t+H-1}\gamma^{k}(s_{k}, a_{k})\vert\vert_{\infty}\nonumber\\
&\qquad\leq\gamma^{t}H\left(\frac{c_{max}-c_{min}}{2}\right)\frac{1-\gamma^{H}}{1-\gamma}\mathcal{O}((t+H)^{q})  + \varepsilon\gamma^{t}\frac{1-\gamma^{H}}{1-\gamma}.
\end{align}
\end{lemma}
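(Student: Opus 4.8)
The plan is to expand the two $H$-step discounted returns as sums of per-step expected costs and compare them term by term, rather than through a recursive Bellman argument. Writing the true return as $V_{t,M}^{\pi}(h_t)=\sum_{k=t}^{t+H-1}\gamma^{k}\mathbb{E}_{h_t,\pi,M}[c(s_k,a_k)]$ and its approximate counterpart $V_{t,\hat M}^{\pi}(h_t)$ with the learned kernel and cost $\hat c$, I would split the $k$-th summand into a cost-approximation part and a distribution-shift part,
\begin{align*}
\mathbb{E}_{h_t,\pi,M}[c(s_k,a_k)]-\mathbb{E}_{h_t,\pi,\hat M}[\hat c(s_k,a_k)]
&=\mathbb{E}_{h_t,\pi,M}\big[c(s_k,a_k)-\hat c(s_k,a_k)\big]\\
&\quad+\Big(\mathbb{E}_{h_t,\pi,M}[\hat c(s_k,a_k)]-\mathbb{E}_{h_t,\pi,\hat M}[\hat c(s_k,a_k)]\Big).
\end{align*}
The first (cost) part is handled directly by the hypothesis $\lVert c-\hat c\rVert_\infty\le\varepsilon$: each term is at most $\varepsilon$, and summing the discount weights $\gamma^k$ over $k=t,\dots,t+H-1$ produces $\varepsilon\gamma^t\frac{1-\gamma^H}{1-\gamma}$, which is exactly the second term of the claimed bound.

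The second (distribution-shift) part is the substance of the lemma. Let $\mu_k^{M}$ and $\mu_k^{\hat M}$ denote the laws of $(s_k,a_k)$ induced from $h_t$ by running $\pi$ under $M$ and under $\hat M$, respectively. Since $\hat c$ takes values in a range of width $c_{max}-c_{min}$, the centering trick (subtracting the midpoint of the cost range, then bounding by the product of the $\ell_\infty$ deviation and the $\ell_1$ distance) gives $\big|\mathbb{E}_{\mu_k^{M}}[\hat c]-\mathbb{E}_{\mu_k^{\hat M}}[\hat c]\big|\le\frac{c_{max}-c_{min}}{2}\lVert\mu_k^{M}-\mu_k^{\hat M}\rVert_1$. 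It then remains to control the total-variation gap between the two trajectory-induced distributions at step $k$ in terms of the one-step kernel error.

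The key step, and the one I expect to be the main obstacle, is the telescoping bound $\lVert\mu_k^{M}-\mu_k^{\hat M}\rVert_1\le\sum_{j=t}^{k-1}\mathbb{E}\big[\lVert P_{h_j}(\cdot\mid s_j,a_j)-\hat P_{h_j}(\cdot\mid s_j,a_j)\rVert_1\big]$. In the Markov setting this follows from a standard hybrid argument that swaps one kernel at a time while holding the remaining dynamics fixed; the difficulty here is that the kernels are history dependent, so a swap at step $j$ alters the conditioning history of every subsequent transition. The hypothesis is tailored precisely to this issue: the kernel gap $\mathcal{O}(t^q)$ is assumed uniform over all histories $h,h'\in\mathcal{H}_j$ that agree at the current state, which is exactly what lets each hybrid-to-hybrid difference be bounded irrespective of which history the coupling lands on. Since the policy $\pi$ is identical under both models, the action-conditional law contributes nothing, and only the transition discrepancies accumulate.

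Finally, I would assemble the pieces. Every one-step kernel error with index $j\le t+H-1$ is at most $\mathcal{O}((t+H)^q)$, and there are at most $H$ such terms, so $\lVert\mu_k^{M}-\mu_k^{\hat M}\rVert_1\le H\,\mathcal{O}((t+H)^q)$ uniformly in $k$. Multiplying by $\frac{c_{max}-c_{min}}{2}$, weighting by $\gamma^k$, and summing over $k=t,\dots,t+H-1$ (which once more contributes $\gamma^t\frac{1-\gamma^H}{1-\gamma}$) yields the first term $\gamma^t H\big(\frac{c_{max}-c_{min}}{2}\big)\frac{1-\gamma^H}{1-\gamma}\mathcal{O}((t+H)^q)$. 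Adding the two contributions and taking the supremum over the initial history $h_t$ (the $\infty$-norm in the statement) completes the bound.
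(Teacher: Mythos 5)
Your proposal is correct and follows essentially the same route as the paper's proof: the same split into a cost-approximation term and a distribution-shift term, the same midpoint-centering trick that produces the $\frac{c_{max}-c_{min}}{2}$ factor, and the same accumulation of $H$ one-step kernel errors of size $\mathcal{O}((t+H)^{q})$ weighted by $\sum_{k=t}^{t+H-1}\gamma^{k}=\gamma^{t}\frac{1-\gamma^{H}}{1-\gamma}$. If anything, you are more explicit than the paper about the hybrid/telescoping step that converts the one-step, history-uniform kernel bound into a bound on the induced state--action distributions, which the paper asserts without detail.
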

\begin{proof}
The difference in the value functions can be written as 
\begin{align*}
&\mathbb{E}_{h_{t}, \pi, \hat{M}}\sum\limits_{k=t}^{t+H-1}\gamma^{k}\hat{c}(s_{k}, a_{k}) - \mathbb{E}_{h_{t}, \pi, M}\sum\limits_{k=t}^{t+H-1}\gamma^{k}c(s_{k}, a_{k}) \\
& \qquad= \mathbb{E}_{h_{t}, \pi, \hat{M}}\sum\limits_{k=t}^{t+H-1}\gamma^{k}\hat{c}(s_{k}, a_{k}) - \mathbb{E}_{h_{t}, \pi, \hat{M}}\sum\limits_{k=t}^{t+H-1}\gamma^{k}c(s_{k}, a_{k}) \\
& \qquad+ \mathbb{E}_{h_{t}, \pi, \hat{M}}\sum\limits_{k=t}^{t+H-1}\gamma^{k}c(s_{k}, a_{k}) - \mathbb{E}_{h_{t}, \pi, M}\sum\limits_{k=t}^{t+H-1}\gamma^{k}c(s_{k}, a_{k}).
\end{align*}
With $\vert\vert c(s,a)-\hat{c}(s,a)\vert\vert_{\infty}\leq\varepsilon$, we can write that
\begin{align*}
&\vert\vert\mathbb{E}_{h_{t}, \pi, \hat{M}}\sum\limits_{k=t}^{t+H-1}\hat{c}(s_{k}, a_{k}) - \mathbb{E}_{h_{t}, \pi, \hat{M}}\sum\limits_{k=t}^{t+H-1}c(s_{k}, a_{k})\vert\vert_{\infty} \\
& \qquad\leq \mathbb{E}_{h_{t}, \pi, \hat{M}}\sum\nolimits_{k=t}^{t+H-1}\gamma^{k}\varepsilon  = \varepsilon\frac{1-\gamma^{H}}{1-\gamma}.
\end{align*}
For the remaining terms,  we can write that
\begin{align*}
&\mathbb{E}_{h_{t}, \pi, \hat{M}}\sum\limits_{k=t}^{t+H-1}\gamma^{k}c(s_{k}, a_{k}) - \mathbb{E}_{h_{t}, \pi, M}\sum\limits_{k=t}^{t+H-1}\gamma^{k}c(s_{k}, a_{k}) \\
& \quad= \left(\sum\limits_{s_{k}\sim h_{t},\pi,\hat{M}}{P}(s_{t}, \hdots, s_{t+H-1})-\sum\limits_{s_{k}\sim h_{t},\pi,M}{P}(s_{t}, \hdots, s_{t+H-1})\right)(\sum\limits_{k=t}^{t+H-1}\gamma^{k}c(s_{k}, a_{k})-\delta),
\end{align*}
\noindent where in the last equality, we can insert $\delta$ as 
\begin{align*}
&\sum\limits_{s_{k}\sim h_{t},\pi,\hat{M}}{P}(s_{t}, \hdots, s_{t+H-1}) = \sum\limits_{s_{k}\sim h_{t},\pi,M}{P}(s_{t}, \hdots, s_{t+H-1})=0.   
\end{align*}
Next,
\begin{align*}
&\vert\vert\mathbb{E}_{h_{t}, \pi, \hat{M}}\sum\limits_{k=t}^{t+H-1}\gamma^{k}c(s_{k}, a_{k}) - \mathbb{E}_{h_{t}, \pi, M}\sum\limits_{k=t}^{t+H-1}\gamma^{k}c(s_{k}, a_{k})\vert\vert_{\infty} \\
&\quad\leq\vert\vert(\sum\limits_{s_{k}\sim h_{t},\pi,\hat{M}}{P}(s_{t}, \hdots, s_{t+H-1})-\sum\limits_{s_{k}\sim h_{t},\pi,M}{P}(s_{t}, \hdots, s_{t+H-1}))\vert\vert_{\infty}\,\,\vert\vert\sum\limits_{k=t}^{t+H-1}\gamma^{k}c(s_{k}, a_{k})-\delta\vert\vert_{\infty}.
\end{align*}
By choosing $\delta = \sum\nolimits_{k=t}^{t+H-1}\gamma^{k}(\frac{c_{max}+c_{min}}{2})$, and upper-bounding difference of transition dynamics as $\mathcal{O}((t+H)^{q})$, we get the final expression.
\end{proof}

Now, we begin the proof of the Theorem\,1 as follows.
\begin{align*}
V_{t,M}^{\hat{\pi}}(\hat{h}_{t}) - V_{t,M}^{\pi^{\ast}}(h^{*}_{t}) &=
 \mathbb{E}_{\hat{h}_{t}, \hat{\pi}, M}\sum\limits_{k=t}^{t+H-1}\gamma^{k}c(s_{k}, a_{k}) - \mathbb{E}_{h^{*}_{t}, \pi^{*}, M}\sum\limits_{k=t}^{t+H-1}\gamma^{k}c(s_{k}, a_{k}) \\
& \qquad+ V_{t+H,M}^{\hat{\pi}}(\hat{h}_{t+H}) - V_{t+H,M}^{\pi^{\ast}}(h^{*}_{t+H}).
\end{align*}
The first set of terms can be expanded as follows.
\begin{align*}
&\mathbb{E}_{\hat{h}_{t}, \hat{\pi}, M}\sum\limits_{k=t}^{t+H-1}\gamma^{k}c(s_{k}, a_{k}) - \mathbb{E}_{h^{*}_{t}, \pi^{*}, M}\sum\limits_{k=t}^{t+H-1}\gamma^{k}c(s_{k}, a_{k})\\
&\qquad= \mathbb{E}_{\hat{h}_{t}, \hat{\pi}, M}\sum\limits_{k=t}^{t+H-1}\gamma^{k}c(s_{k}, a_{k}) - \mathbb{E}_{\hat{h}_{t}, \hat{\pi}, \hat{M}}\sum\limits_{k=t}^{t+H-1}\gamma^{k}\hat{c}(s_{k}, a_{k})\\
&\qquad\qquad + \mathbb{E}_{\hat{h}_{t}, \hat{\pi}, \hat{M}}\sum\limits_{k=t}^{t+H-1}\gamma^{k}\hat{c}(s_{k}, a_{k}) - \mathbb{E}_{\hat{h}_{t}, \pi^{*}, \hat{M}}\sum\limits_{k=t}^{t+H-1}\gamma^{k}\hat{c}(s_{k}, a_{k})\\
&\qquad\qquad + \mathbb{E}_{\hat{h}_{t}, {\pi}^{*}, \hat{M}}\sum\limits_{k=t}^{t+H-1}\gamma^{k}\hat{c}(s_{k}, a_{k}) - \mathbb{E}_{\hat{h}_{t}, \pi^{*}, \hat{M}}\sum\limits_{k=t}^{t+H-1}\gamma^{k}c(s_{k}, a_{k})\\
&\qquad\qquad + \mathbb{E}_{\hat{h}_{t}, {\pi}^{*}, \hat{M}}\sum\limits_{k=t}^{t+H-1}\gamma^{k}c(s_{k}, a_{k}) - \mathbb{E}_{{h}^{*}_{t}, \pi^{*}, M}\sum\limits_{k=t}^{t+H-1}\gamma^{k}c(s_{k}, a_{k}).
\end{align*}
Since the $\hat{\pi}$ is greedy policy that optimize \eqref{eqn:mpc_general}, therefore, 
\begin{equation}
 \mathbb{E}_{\hat{h}_{t}, \hat{\pi}, \hat{M}}\sum\limits_{k=t}^{t+H-1}\gamma^{k}\hat{c}(s_{k}, a_{k}) \leq \mathbb{E}_{\hat{h}_{t}, \pi^{*}, \hat{M}}\sum\limits_{k=t}^{t+H-1}\gamma^{k}\hat{c}(s_{k}, a_{k}).
\end{equation}
Now, using lemma\,1 and the approximation quality of costs, we get 
\begin{align*}
&\vert\vert V_{t,M}^{\hat{\pi}}(\hat{h}_{t}) - V_{t,M}^{\pi^{\ast}}(h^{*}_{t})\vert\vert_{\infty}\leq \\
&\quad 2\gamma^{t}H\left(\frac{c_{max}-c_{min}}{2}\right)\frac{1-\gamma^{H}}{1-\gamma}\mathcal{O}((t+H)^{q}) + 2\gamma^{t}\varepsilon\frac{1-\gamma^{H}}{1-\gamma}+\vert\vert V_{t+H,M}^{\hat{\pi}}(\hat{h}_{t+H}) - V_{t+H,M}^{\pi^{\ast}}(h^{*}_{t+H})\vert\vert_{\infty}.
\end{align*}
By adding the terms from $t=0$ till $T-1$, the terms that are $H$ apart cancels out in a telescopic sum fashion. Finally, we can write that
\begin{align*}
\vert\vert V_{0,M}^{\hat{\pi}}({h}_{0}) - V_{0,M}^{\pi^{\ast}}(h_{0})\vert\vert_{\infty} &\leq\sum\limits_{t=0}^{T-1}2\gamma^{t}H\left(\frac{c_{max}-c_{min}}{2}\right)\frac{1-\gamma^{H}}{1-\gamma}\mathcal{O}((t+H)^{q}) + 2\gamma^{t}\varepsilon\frac{1-\gamma^{H}}{1-\gamma}\\
&\leq 2\frac{1-\gamma^{H}}{1-\gamma}\left(\frac{c_{max}-c_{min}}{2}\right)H\mathcal{O}(T^{q})\nonumber\\
&\qquad + 2\varepsilon\frac{1-\gamma^{H}}{1-\gamma}\frac{1-\gamma^{T}}{1-\gamma}.
\end{align*}
\qed
\section{Fractional MPC}
\label{appn:fracMPCQprog}
The fractional MPC in Section\,\ref{ssec:frac_mpc} has linear constraints. Define the optimization variable $x = [\bar{s}[k+H]^T, \bar{s}[k+H-1]^{T},\hdots,\bar{s}[0]^{T}, a[k+H-1]^{T}, \hdots, a[k]^{T}]^{T}$. The first set of constraints can then be written as $\Theta\,x=b$, where
\setcounter{MaxMatrixCols}{20}
\begin{align*}
\Theta =
\begin{bmatrix}
I & D(\alpha, 1)+A & D(\alpha,2) & \hdots  &&&& D(\alpha,k+H) & B & 0 & \hdots & 0 \\
0 & I & D(\alpha, 1)+A  & D(\alpha,2) &&&& \hdots & 0 & B & \hdots &0\\
\vdots & \vdots & \ddots & \hdots & I & D(\alpha, 1)+A & \hdots & D(\alpha,k+1) & 0 & 0 & \hdots & B\\
\end{bmatrix},
\end{align*}
\noindent using,
\begin{equation}
D(\alpha, j)= diag(\psi(\alpha_{1},j), \hdots, \psi(\alpha_{n},j)).
\end{equation}
and $b = [e[k+H-1]^T, \hdots, e[k]^T, 0^T, \hdots, 0^T]^T$. The history equality constraint can be set as $\Phi\,x=d$ with $\Phi = [0,  I,  0]$ of appropriate size, and $d = [0^T, \hdots, s[k]^T, \hdots, s[0]^T, 0^T, \hdots, 0^T]^T$. Using these two equality constraints, and boundary limits for $\bar{s}[k]$, we get a quadratic programming with approximated costs $\hat{c}$ as quadratic function. For other convex versions of the approximated cost, a convex optimization with the above linear constraints can be formulated.

\end{document}